\documentclass{article}
\usepackage[T1]{fontenc}
\usepackage{microtype}
\usepackage{graphicx}
\usepackage{subcaption}
\usepackage{booktabs}
\usepackage{multirow}
\usepackage{colortbl}
\usepackage{makecell}
\usepackage{hyperref}

\usepackage[accepted]{icml2026}

\usepackage{amsmath}
\usepackage{amssymb}
\usepackage{mathtools}
\usepackage{amsthm}
\usepackage{algorithm}
\usepackage{algorithmic}
\usepackage[capitalize,noabbrev]{cleveref}

\theoremstyle{plain}
\newtheorem{theorem}{Theorem}[section]

\theoremstyle{definition}

\theoremstyle{remark}

\icmltitlerunning{WBMM: Windowed Batch Matrix Multiplication}

\begin{document}

\twocolumn[
\icmltitle{WBMM: Windowed Batch Matrix Multiplication for\\Efficient Large Receptive Field Convolution}

\icmlsetsymbol{equal}{*}

\begin{icmlauthorlist}
\icmlauthor{Wan Song}{hfut}
\icmlauthor{Wei Zhou}{lingyang}
\icmlauthor{Rui Wang}{lingyang}
\icmlauthor{Jun Yu}{kut}
\icmlauthor{Toru Kurihara}{kut}
\icmlauthor{Jiajia Xu}{lingyang}
\icmlauthor{Shu Zhan}{hfut}
\end{icmlauthorlist}

\icmlaffiliation{hfut}{School of Computer Science and Information Engineering, Hefei University of Technology, Hefei, Anhui, China}
\icmlaffiliation{lingyang}{Lingyang Industrial Internet Co., Ltd., Hefei, Anhui, China}
\icmlaffiliation{kut}{School of Information, Kochi University of Technology, Kami, Kochi, Japan}
\icmlcorrespondingauthor{Shu Zhan}{\mbox{shu\_zhan@hfut.edu.cn}}
\icmlcorrespondingauthor{Jiajia Xu}{\mbox{xujiajia@mail.ustc.edu.cn}}

\icmlkeywords{Machine Learning, Convolutional Neural Networks, Large Kernel Convolution, Efficient Deep Learning}

\vskip 0.3in
]

% PDF metadata: must come *after* \icmltitle, which itself issues
% \hypersetup{pdftitle={<title>}} and would otherwise overwrite these
% (the \\ line break in the title would then be silently dropped).
\hypersetup{
  pdftitle={WBMM: Windowed Batch Matrix Multiplication for Efficient Large Receptive Field Convolution},
  pdfauthor={Wan Song, Wei Zhou, Rui Wang, Jun Yu, Toru Kurihara, Jiajia Xu, Shu Zhan}
}

\printAffiliationsAndNotice{}

\begin{abstract}
Large kernel depthwise convolutions achieve strong performance but suffer from significant degradation as kernel size grows due to irregular memory access from gather-based computation; while Large Kernel Acceleration (LKA) helps on small feature maps, it becomes counterproductive on large feature maps, even slower than non-accelerated implementations. We propose Windowed Batch Matrix Multiplication (WBMM), which \emph{partitions} input into contiguous windows and \emph{indexes} a compact relative position bias table to construct weight matrices, enabling regular memory access via batched matrix multiplication. This yields a unique property: WBMM's throughput improves with larger windows, opposite to depthwise convolutions that degrade with larger kernels. Operator-level benchmarks show WBMM with $14 \times 14$ windows outperforms $5 \times 5$ depthwise convolution baselines in speed while providing a $7.8\times$ larger per-layer receptive field. Combined with inter-block cross-window communication and hierarchical window reparameterization, WBMM achieves comparable or higher accuracy on ImageNet-1K, COCO, and ADE20K with 1.31--1.88$\times$ training speedup, and demonstrates consistent advantages across GPU, CPU, and edge devices without requiring specialized acceleration kernels. Our code is available at \url{https://github.com/wansong-s/WBMM}.
\end{abstract}

\begin{figure}[t!]
  \centering
  \includegraphics[width=\columnwidth]{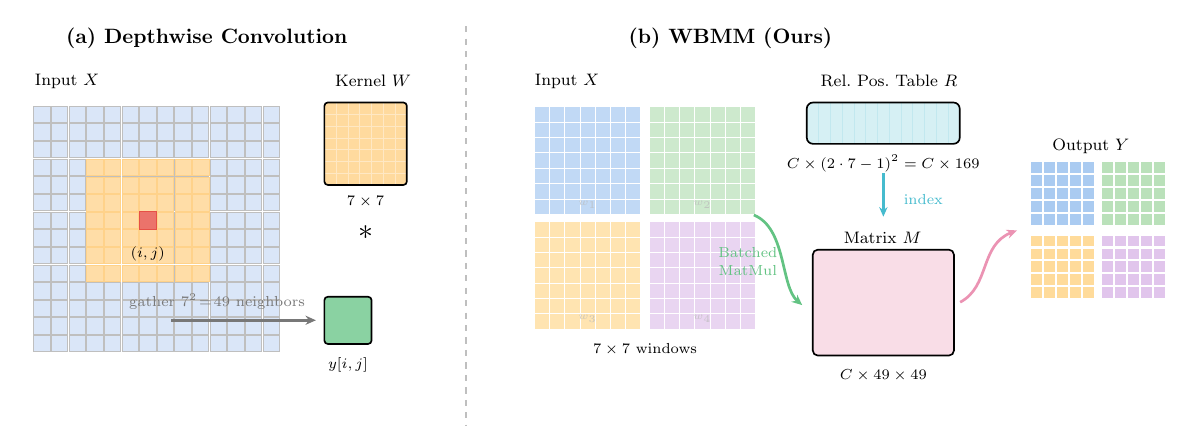}
  \caption{\textbf{Depthwise convolution vs.\ WBMM.} (a)~Depthwise convolution gathers $k^2$ scattered neighbors per output, causing irregular memory access that worsens with kernel size. (b)~WBMM partitions input into contiguous windows and constructs weights via table indexing, enabling regular memory access through batched matrix multiplication.}
  \label{fig:wbmm_comparison}
\end{figure}

\section{Introduction}
\label{sec:intro}

Convolutional Neural Networks (CNNs) have undergone significant architectural evolution. While early designs relied on stacking small kernels (typically $3 \times 3$), recent studies have demonstrated substantial gains from large convolution kernels. This shift was driven by ConvNeXt~\cite{liu2022convnet} with $7 \times 7$ kernels and further pushed by RepLKNet~\cite{ding2022scaling} ($31 \times 31$ kernels), SLaK~\cite{liu2023more} ($51 \times 51$ kernels), and UniRepLKNet~\cite{ding2024unireplknet} ($13 \times 13$ kernels).

However, large kernel convolutions face fundamental computational challenges that severely limit their practical deployment. Standard convolution implementations must gather scattered input neighborhoods for each output position, causing increasingly irregular memory access patterns as kernel size grows.

\textbf{The core problem.} Our comprehensive operator-level benchmarks (\cref{sec:operator_benchmark}) reveal that standard depthwise convolutions slow down by \textbf{71--78\%} when kernel size increases from $5 \times 5$ to $13 \times 13$, and by \textbf{92--94\%} for $27 \times 27$ kernels. This degradation is consistent across all batch sizes (4, 16, 64, 128, 256) and feature map resolutions ($\geq 28 \times 28$), demonstrating that the performance penalty is \emph{inherent} to the memory access pattern rather than implementation-specific.

\Cref{fig:wbmm_comparison} contrasts the two computation paradigms at a high level. As \cref{fig:gpu_memory} illustrates in more detail, the root cause is that gather-based depthwise convolution is memory-bound: each output collects $k^2$ values from non-contiguous rows, exceeding L1/L2 cache and yielding only $O(1)$ FLOPs per loaded element. WBMM eliminates this irregularity by reading each window as one contiguous block, shifting the operator from memory-bound to compute-bound (full arithmetic-intensity analysis in \cref{sec:wbmm_algorithm}).

\begin{figure}[t]
  \centering
%% Figure 2 is now a vector PDF built from figure-sources/gpu_memory_access_pattern.tex
  \includegraphics[width=\columnwidth]{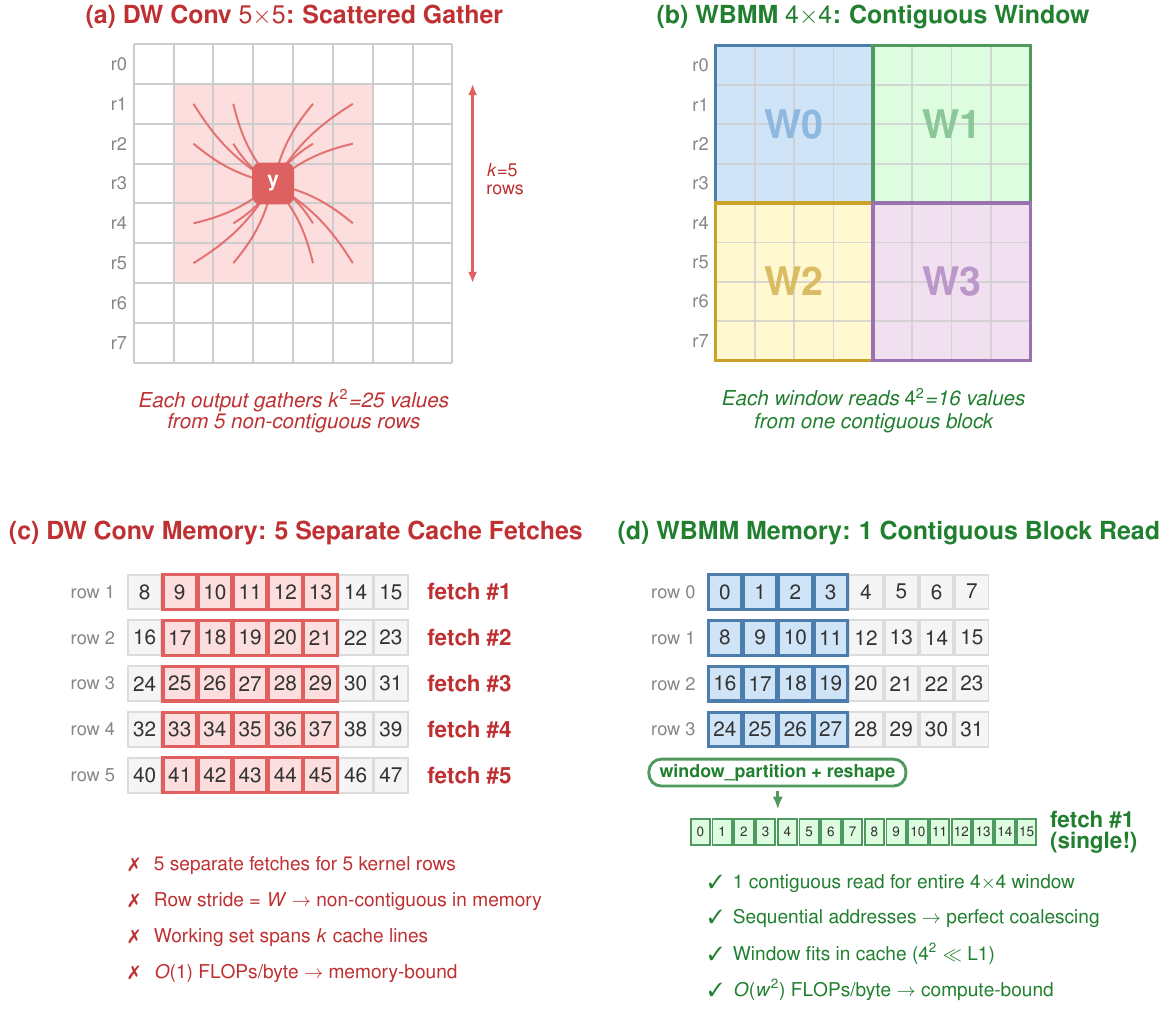}
  \caption{\textbf{GPU memory access pattern: depthwise convolution vs.\ WBMM.} (a,c)~A $5 \times 5$ depthwise convolution gathers 25 values from 5 non-contiguous rows, requiring 5 separate cache fetches with stride $W$. (b,d)~WBMM reads each window as a single contiguous block, fitting entirely in L1 cache and enabling coalesced access. A $4 \times 4$ window is shown here for illustration only; actual WBMM configurations use $7 \times 7$ or $14 \times 14$ windows.}
  \label{fig:gpu_memory}
\end{figure}

\textbf{The limitation of existing acceleration.} While the Large Kernel Acceleration (LKA) CUDA kernels from RepLKNet/UniRepLKNet mitigate this degradation on small feature maps during training, their fixed tiling no longer matches large feature maps, where they become \textbf{even slower than the non-accelerated baseline}. At $224 \times 224$ resolution and batch=128, LKA with $7 \times 7$ kernels is \textbf{80\% slower} than baseline, and $13 \times 13$ kernels are \textbf{89\% slower}. This severely limits deployment of large kernels in tasks that process high-resolution feature maps, such as semantic segmentation, object detection, and super-resolution.

\subsection{Our Approach: Traversing Parameters Instead of Data}
\label{sec:intro_approach}

We propose Windowed Batch Matrix Multiplication (WBMM), which changes the computation paradigm by \emph{traversing the parameter table} instead of input data. As illustrated in \cref{fig:wbmm_comparison}:
\begin{itemize}
    \item \textbf{Depthwise convolution}: For each output position, \emph{gather} $k^2$ scattered input values $\rightarrow$ irregular memory access.
    \item \textbf{WBMM}: \emph{Partition} input into contiguous windows, \emph{index} parameter table to construct weight matrix $\rightarrow$ regular memory access.
\end{itemize}

Concretely, WBMM partitions input into contiguous non-overlapping windows (e.g., $7 \times 7$) and constructs a weight matrix $M \in \mathbb{R}^{C \times 49 \times 49}$ by indexing into a compact relative position bias table $R \in \mathbb{R}^{C \times 169}$; the output is computed via batched matrix multiplication on contiguous memory blocks. This ``partition-then-multiply'' design is why WBMM's efficiency \textbf{does not degrade} as feature map resolution or window size grows---the batched matrix multiplication fully leverages GPU parallelism and actually improves with more windows.

\subsection{Contributions}

Our main contributions are:
\begin{itemize}
    \item \textbf{Novel computation paradigm}: We replace the gather-based memory access of large-kernel depthwise convolution with a ``partition-then-index'' scheme: WBMM indexes a compact relative-position bias table to form a per-channel weight matrix, then applies it via batched matrix multiplication on contiguous windows. Throughput \emph{improves} with larger windows and stays consistent across resolutions---versus the 80--89\% LKA slowdown at $224 \times 224$ (batch=128)---making WBMM suitable for classification, detection, segmentation, and super-resolution.
    \item \textbf{Batch-independent weight construction}: Unlike window attention, which builds an input-dependent matrix per sample and window ($O(B \cdot N_h N_w \cdot d^2)$), WBMM constructs a \emph{single} channel-dependent matrix ($O(C \cdot d^2)$), scaling better with batch size and resolution.
    \item \textbf{Three lightweight components}: (i)~inter-block $3 \times 3$ depthwise convolutions for cross-window communication, reaching comparable or higher accuracy at higher speed than intra-block alternatives; (ii)~inference-time weight-matrix caching that removes redundant indexing on small feature maps; and (iii)~hierarchical window reparameterization, a dual-scale fusion improving dense-prediction accuracy without slowing inference.
    \item \textbf{Hardware-agnostic efficiency}: Without specialized kernels, WBMM is consistently faster on GPU, CPU, and edge devices (\cref{app:hardware_generalization}); across ImageNet-1K, COCO, and ADE20K it delivers a 1.31--1.88$\times$ training speedup and, with hierarchical reparameterization, surpasses UniRepLKNet in both mIoU and AP.
\end{itemize}

\section{Related Work}
\label{sec:related}

\subsection{Large Convolution Kernels in CNNs}

The resurgence of large convolution kernels began with ConvNeXt~\cite{liu2022convnet}, which showed that $7 \times 7$ depthwise convolutions could match Transformer performance. This trend continued with RepLKNet~\cite{ding2022scaling}, which extended kernels to $31 \times 31$. SLaK~\cite{liu2023more} reduced parameters through sparse rectangular decomposition, reaching $51 \times 51$ kernels, though with suboptimal computational efficiency. PeLK~\cite{chen2024pelk} achieved linear parameter scaling through parameter sharing, enabling $101 \times 101$ kernels but still faced quadratic computational complexity in $k$.

UniRepLKNet~\cite{ding2024unireplknet} reported that $13 \times 13$ is a strong kernel size across various modalities, based on extensive ablations that balance performance and computational cost. However, its structural reparameterization adds training overhead and relies on specialized GPU kernels that can become counterproductive during inference on large feature maps.

\subsection{Convolution Implementations and Their Limitations}
\label{sec:related_conv}

\textbf{Im2col+GEMM.} The classic im2col+GEMM approach~\cite{chellapilla2006high,jia2014caffe} converts convolution to matrix multiplication by extracting each $k \times k$ receptive field as columns. For input $X \in \mathbb{R}^{B \times C \times H \times W}$, this creates an expanded matrix of size $\mathbb{R}^{BC \times HW \times k^2}$. The $k^2$ memory expansion becomes prohibitive for large kernels, and gathering scattered neighbors creates irregular memory access that worsens with kernel size.

\textbf{Standard Depthwise Convolution (DW-Std).} In our tested PyTorch~\cite{paszke2019pytorch}/cuDNN~\cite{chetlur2014cudnn} configuration, depthwise convolutions run as implicit GEMM. While implicit GEMM avoids explicit im2col materialization by computing indices on-the-fly, reducing memory overhead from $O(k^2)$ to $O(1)$, the fundamental limitation remains the \emph{gather-based} computation pattern: for each output position, the implementation must compute indices to gather $k^2$ scattered input values. As kernel size increases, these memory accesses become increasingly non-coalesced, causing the 71--78\% degradation we observe from $5 \times 5$ to $13 \times 13$ kernels.

\textbf{Large Kernel Acceleration (LKA).}
RepLKNet~\cite{ding2022scaling} and UniRepLKNet~\cite{ding2024unireplknet} provide specialized CUDA kernels (we call these LKA throughout, distinct from VAN's Large Kernel Attention~\cite{guo2023visual}) that tile the depthwise convolution so a working block stays in on-chip shared memory, improving data reuse. The gather-based access pattern itself is unchanged---each output still collects $(2k_h+1)(2k_w+1)$ scattered inputs---so this reuse holds only while the block stays on-chip, i.e.\ on small feature maps ($\leq 14 \times 14$). On larger maps its fixed compile-time tiling no longer matches the workload and, lacking cuDNN's adaptive algorithm selection, the kernel runs \emph{slower} than the standard baseline---likely a fixed-configuration effect: at $224 \times 224$ and batch=128, LKA with $13 \times 13$ kernels is \textbf{89\% slower} than DW-Std $5 \times 5$ (\cref{tab:unified_benchmark}).

\textbf{WBMM (Ours).} Unlike all above methods that traverse input data to gather scattered neighborhoods, WBMM traverses only the parameter table; input data remains in contiguous window blocks, providing regular memory access regardless of effective kernel size or feature map resolution. The arithmetic-intensity analysis explaining \emph{why} this enables acceleration with larger windows is deferred to \cref{sec:wbmm_algorithm}.

\subsection{Window-Based Processing in Vision Models}

Window-based processing has become a key efficiency technique in vision models. Vision Transformer (ViT)~\cite{dosovitskiy2021image} successfully adapts transformers to images, while Swin Transformer~\cite{liu2021swin} introduces efficient window-based self-attention with non-overlapping windows. For cross-window information exchange, Swin employs shifted window partitioning in alternate layers, and has been applied to various tasks~\cite{liang2021swinir,hatamizadeh2021swin,cao2022swin}.

\textbf{Critical distinction from window attention.} Unlike Swin-style window attention that computes \emph{input-dependent} matrices and applies a softmax per sample/window, WBMM constructs a single \emph{input-independent} weight matrix $M$ shared across the whole batch; $M$ can be cached at inference (\cref{sec:caching}) and tiled with a simple purely-linear kernel. We empirically compare against a parameter-matched non-overlapping window attention in \cref{app:win_attn_compare}.

\subsection{Structural Reparameterization}

Structural reparameterization enhances neural networks by adding branches during training that are merged at inference. ACNet~\cite{ding2019acnet} introduced this idea with asymmetric convolutions, and RepVGG~\cite{ding2021repvgg} popularized it by adding $1 \times 1$ convolutions and identity shortcuts during training and fusing them at inference. RepLKNet~\cite{ding2022scaling} applied it to large kernels, merging a parallel small-kernel branch into the $31 \times 31$ kernel at inference. RepMLPNet~\cite{ding2021repmlp} introduced locality injection for fully-connected layers.

Our WBMM method introduces a structural reparameterization tailored to window-based batch matrix multiplication with embedded position bias. We apply this minimal reparameterization only to the smallest model variants (WBMM-P and WBMM-N) at the final stage (S4) for image classification tasks, avoiding the heavy multi-branch training overhead of UniRepLKNet.

\subsection{Position Encoding in Vision Networks}

Position encoding is crucial for modeling spatial relationships. Existing approaches include absolute position encoding~\cite{dosovitskiy2021image}, relative position bias~\cite{shaw2018self,liu2021swin}, and conditional position encoding~\cite{chu2023conditional}. Unlike Swin~\cite{liu2021swin}, which shares a per-head relative position bias across all samples and windows, our approach adopts channel-specific position-aware weights, enhancing representational capacity while maintaining computational efficiency.

\section{Method}
\label{sec:method}

\subsection{Notation and Definitions}
\label{sec:notation}

We establish notation used throughout this paper: $X \in \mathbb{R}^{B \times C \times H \times W}$ denotes the input feature map (batch $B$, channels $C$, spatial $H \times W$); $(h, w)$ are spatial position indices, not to be confused with the window size $w_h \times w_w$ or the feature-map width $W$; $N = B \cdot N_h \cdot N_w$ is the total number of windows, where $N_h = H / w_h$, $N_w = W / w_w$; $d = w_h \cdot w_w$ is positions per window; $R \in \mathbb{R}^{C \times (2w_h-1)(2w_w-1)}$ is the relative position bias table; $I \in \mathbb{Z}^{d \times d}$ is the relative position index matrix; and $M \in \mathbb{R}^{C \times d \times d}$ is the weight matrix constructed from $R$. To avoid clash between $W$ (feature-map width) and depthwise kernel weights, we denote the kernel by $K_{c,p,q}$ in \cref{eq:depthwise_conv} below; likewise, to avoid clash with the batch index $b$, we write the per-channel bias as $\beta_c$.

\subsection{Problem Formulation: Why Large Kernels Are Slow}

Standard depthwise convolution with kernel size $(2k_h+1) \times (2k_w+1)$ computes:
\begin{equation}
    Y_{b,c,h,w} = \sum_{i=-k_h}^{k_h} \sum_{j=-k_w}^{k_w} K_{c,\,i+k_h,\,j+k_w} \cdot X_{b,c,h+i,w+j} + \beta_c
    \label{eq:depthwise_conv}
\end{equation}
where $K_c$ is the depthwise kernel for channel $c$ and $\beta_c$ is a per-channel bias.

The key computational challenge is that for each output position $(h,w)$, the implementation must \emph{gather} $(2k_h+1)(2k_w+1)$ input values $X_{b,c,h+i,w+j}$ from scattered memory locations. As kernel size $k$ increases, these memory accesses become increasingly non-coalesced, causing severe performance degradation regardless of the underlying implementation strategy.

\subsection{Theoretical Foundation: Convolution--Matrix Equivalence}
\label{sec:theory}

We establish that depthwise convolutions on bounded feature maps can be exactly represented as matrix multiplication with position-dependent weights.

\begin{theorem}[Maximum Effective Kernel Size]
\label{theorem:max_kernel}
For a feature map with spatial dimensions $H \times W$, the maximum effective kernel size that covers all valid relative offsets is $(2H-1) \times (2W-1)$.
\end{theorem}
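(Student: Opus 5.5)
We make precise what ``covers all valid relative offsets'' means and then argue in two directions: the offset range is exactly $\{-(H-1),\dots,H-1\}\times\{-(W-1),\dots,W-1\}$, and any larger kernel adds only taps that never touch valid data. The setting is \cref{eq:depthwise_conv} with zero padding, so $X_{b,c,h',w'}=0$ whenever $(h',w')$ lies outside $[0,H-1]\times[0,W-1]$. A kernel tap at offset $(i,j)$ affects output $(h,w)$ only if both $(h,w)$ and $(h+i,w+j)$ lie in the grid.

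\textbf{Step 1 (offsets are bounded).} For any two valid positions, $0\le h,h+i\le H-1$ forces $|i|\le H-1$, and likewise $|j|\le W-1$. So for any kernel with $k_h\ge H-1$, every term with $|i|>H-1$ multiplies a zero-padded entry for \emph{every} output $(h,w)$. Such a term contributes nothing. Truncating the kernel to $k_h=H-1$, $k_w=W-1$ therefore leaves the operator unchanged. The effective size is thus at most $(2(H-1)+1)\times(2(W-1)+1)=(2H-1)\times(2W-1)$.

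\textbf{Step 2 (every offset is realized).} For each $i\in\{-(H-1),\dots,H-1\}$ we exhibit a valid pair, e.g.\ $h=\max(0,-i)$, $h+i=\max(i,0)$, and do the same for $j$. This means no smaller kernel can reproduce a general kernel of size $(2H-1)\times(2W-1)$: the extreme taps $i=\pm(H-1)$ (for instance, linking corner to opposite corner) genuinely influence some output. Hence the bound is attained.

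\textbf{Step 3 (connection to the matrix view).} Counting distinct relative offsets gives $(2H-1)(2W-1)$, which matches the size of the table $R\in\mathbb{R}^{C\times(2w_h-1)(2w_w-1)}$ when the window equals the feature map. This is the link the paper will use to write the operator as $Y=M X$ with $M_{c,p,q}=R_{c,I_{pq}}$.

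The main obstacle is definitional rather than technical. ``Maximum effective kernel size'' has to be read as the largest kernel whose every tap can affect some output, equivalently the smallest kernel that represents all possible depthwise kernels on an $H\times W$ map. The proof should fix the zero-padding boundary convention explicitly. Under circular padding, for example, offsets would wrap around and the count would change to $H\times W$. Once this convention is stated, both directions are elementary index bounds.
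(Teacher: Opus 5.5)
Your proposal is correct and takes essentially the same approach as the paper, which simply notes that the offset between any two grid positions lies in $[-(H-1),H-1]\times[-(W-1),W-1]$ and counts the $(2H-1)(2W-1)$ such offsets. Your Step 1 (taps outside this range only ever meet zero padding, which the paper defers to Step 2 of the proof of \cref{theorem:equivalence}) and your Step 2 (an explicit valid pair realizing each offset) make rigorous what the paper leaves implicit in ``spanning exactly''.
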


\begin{proof}
For any two positions $(h_1, w_1)$ and $(h_2, w_2)$ in an $H \times W$ feature map, the relative offset $(\delta_h, \delta_w) = (h_1 - h_2, w_1 - w_2)$ satisfies $\delta_h \in [-(H-1), H-1]$ and $\delta_w \in [-(W-1), W-1]$, spanning exactly $(2H-1) \times (2W-1)$ distinct relative positions.
\end{proof}

\begin{theorem}[Convolution--Matrix Equivalence]
\label{theorem:equivalence}
For a feature map with spatial dimensions $H \times W$ and any depthwise convolution with kernel size $(2k_h+1) \times (2k_w+1)$ (with the standard zero-padding convention on the input), the convolution can be exactly represented as
\begin{equation}
    y_c = x_c M_c + \beta_c \mathbf{1}^\top,
    \label{eq:matrix_form}
\end{equation}
where $x_c \in \mathbb{R}^{B \times HW}$ is the flattened input for channel $c$, $M_c \in \mathbb{R}^{HW \times HW}$ is a position-dependent weight matrix whose entries depend only on the relative offset between input and output positions, and $\mathbf{1} \in \mathbb{R}^{HW}$ broadcasts the scalar bias $\beta_c$ to every output position.
\end{theorem}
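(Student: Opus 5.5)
The plan is to construct $M_c$ explicitly from the kernel $K_c$ and verify entrywise that $x_c M_c + \beta_c \mathbf{1}^\top$ reproduces \cref{eq:depthwise_conv}. First I fix a flattening convention: position $(h,w)$ with $0\le h<H$, $0\le w<W$ maps to the index $\phi(h,w)=hW+w$. The batch index $b$ becomes the row of $x_c$, so $(x_c)_{b,\phi(h,w)} = X_{b,c,h,w}$ and $(y_c)_{b,\phi(h,w)} = Y_{b,c,h,w}$. Then I define, for input position $(h',w')$ and output position $(h,w)$,
\begin{equation*}
(M_c)_{\phi(h',w'),\,\phi(h,w)} =
\begin{cases}
K_{c,\,(h'-h)+k_h,\,(w'-w)+k_w}, & |h'-h|\le k_h,\ |w'-w|\le k_w,\\
0, & \text{otherwise}.
\end{cases}
\end{equation*}
This entry visibly depends only on the relative offset $(\delta_h,\delta_w)=(h'-h,\,w'-w)$, which gives the structural claim in the theorem. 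By \cref{theorem:max_kernel}, the offsets range over at most $(2H-1)\times(2W-1)$ values. So $M_c$ is determined by a table indexed by these offsets, and the kernel entries outside this range are simply never used.

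Next I verify the product entrywise:
\begin{equation*}
(x_c M_c)_{b,\phi(h,w)} = \sum_{h'=0}^{H-1}\sum_{w'=0}^{W-1} X_{b,c,h',w'}\,(M_c)_{\phi(h',w'),\phi(h,w)}.
\end{equation*}
I substitute $i=h'-h$ and $j=w'-w$. The indicator in $M_c$ restricts the sum to $|i|\le k_h$ and $|j|\le k_w$, while the summation range restricts it to $0\le h+i<H$ and $0\le w+j<W$. The sum therefore equals $\sum_{i=-k_h}^{k_h}\sum_{j=-k_w}^{k_w} K_{c,i+k_h,j+k_w}X_{b,c,h+i,w+j}$, where terms with out-of-range $(h+i,w+j)$ are omitted. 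This is exactly the zero-padding convention, under which $X_{b,c,h+i,w+j}=0$ outside the feature map. Adding $\beta_c\mathbf{1}^\top$, whose broadcast places $\beta_c$ in every column, completes the match with \cref{eq:depthwise_conv} for all $b,h,w$.

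The main obstacle is the boundary bookkeeping. I must show that truncating the sum to in-range input positions is exactly equivalent to zero padding, and that the case $k_h\ge H$ (or $k_w\ge W$) causes no problems. In that case the indicator condition is vacuous for all valid offsets and only the central $(2H-1)\times(2W-1)$ portion of $K_c$ contributes, consistent with \cref{theorem:max_kernel}. I would handle this by stating once that the double sum over $(i,j)$ in \cref{eq:depthwise_conv} is interpreted with zero-padded $X$. Both sides then reduce to the same sum over the intersection of the kernel support and the valid offset box, and the bijection $(h',w')\leftrightarrow(i,j)$ between these index sets makes the equality immediate.
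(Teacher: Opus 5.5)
Your proposal is correct and follows essentially the same route as the paper: the same row-major linearization $hW+w$, the same offset-indexed definition of $M_c$ (zero outside $|\delta_h|\le k_h$, $|\delta_w|\le k_w$), and an entrywise check of $x_cM_c+\beta_c\mathbf{1}^\top$ against the zero-padded convolution. The only difference is direction. The paper first rewrites the convolution as a sum over the whole feature map using a zero-extended kernel and then reads off $M_c$. You instead expand the matrix product and substitute back to kernel offsets, which makes explicit that the dropped out-of-range terms vanish under zero padding.
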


\begin{proof}
We adopt the standard zero-padding convention $X_{b,c,m,n} = 0$ for $(m,n) \notin [0,H{-}1] \times [0,W{-}1]$, and extend the kernel by $K_{c,p,q} = 0$ for $(p,q) \notin [0,2k_h] \times [0,2k_w]$. We then construct $M_c$ in three steps.

\emph{Step 1: Variable substitution with safe range extension.}
In \cref{eq:depthwise_conv}, substitute $m = h+i$, $n = w+j$. The original sum over $(i,j) \in [-k_h,k_h] \times [-k_w,k_w]$ becomes a sum over $(m,n) \in [h{-}k_h, h{+}k_h] \times [w{-}k_w, w{+}k_w]$. We extend the index range to $[0, H{-}1] \times [0, W{-}1]$. By the two extensions above, every added term satisfies either $K_{c,\,m{-}h{+}k_h,\,n{-}w{+}k_w} = 0$ (when $|m{-}h|>k_h$ or $|n{-}w|>k_w$) or $X_{b,c,m,n}=0$ (when $(m,n)$ is outside the feature map), so the added contributions vanish identically and the sum value is preserved:
\begin{equation}
    Y_{b,c,h,w} = \sum_{m=0}^{H-1}\sum_{n=0}^{W-1} K_{c,\, m{-}h{+}k_h,\, n{-}w{+}k_w} \cdot X_{b,c,m,n} + \beta_c.
    \label{eq:proof_step1}
\end{equation}

\emph{Step 2: Index linearization.}
Linearize the 2D coordinates by the bijection $t_1 = m\,W + n$, $t_2 = h\,W + w$ between $[0,H{-}1] \times [0,W{-}1]$ and $[0, HW{-}1]$. This unique encoding yields a matrix $M_c \in \mathbb{R}^{HW \times HW}$. The linearization handles any kernel size: for kernels larger than $(2H{-}1) \times (2W{-}1)$, excess parameters multiply only zero inputs (by \cref{theorem:max_kernel}) and receive zero gradient, so they do not contribute to the output; for smaller kernels, excess matrix entries are zero.

\emph{Step 3: Offset-dependent entry definition.}
For each pair $(t_1, t_2)$, recover $(m,n)$ and $(h,w)$ via the inverse of the linearization and set $\delta_h = m - h$, $\delta_w = n - w$. Define
\begin{equation*}
    M_c[t_1, t_2] = \begin{cases} K_{c,\delta_h+k_h,\delta_w+k_w} & \text{if } |\delta_h| \leq k_h, |\delta_w| \leq k_w, \\ 0 & \text{otherwise.} \end{cases}
\end{equation*}
Substituting this into the matrix product $(x_c M_c)_{b, t_2} = \sum_{t_1} X_{b,c,m(t_1),n(t_1)} M_c[t_1, t_2]$ reproduces \cref{eq:proof_step1}, and hence \cref{eq:depthwise_conv}, for every $(b,c,h,w)$.

Crucially, $M_c[t_1,t_2]$ is determined entirely by the relative offset $(\delta_h, \delta_w)$: any two index pairs with the same $(\delta_h, \delta_w)$ receive identical values. The matrix $M_c$ thus possesses a block-Toeplitz structure---a property WBMM exploits via a compact bias table $R$ of size $O((2w_h{-}1)(2w_w{-}1))$ per channel rather than a dense $HW \times HW$ matrix.
\end{proof}

\textbf{Design choice: from exact equivalence to a windowed approximation.}
\Cref{theorem:equivalence} is exact but impractical to instantiate globally: at $56 \times 56$, $M_c$ would carry $HW = 3{,}136$ rows and $\sim$9.8M entries per channel. Following the locality strategy of Swin Transformer~\cite{liu2021swin}, we apply \cref{eq:matrix_form} \emph{inside} non-overlapping $w_h \times w_w$ windows, which shrinks each matrix to $\mathbb{R}^{d \times d}$ with $d = w_h w_w$ (e.g., $49 \times 49$ for $7 \times 7$ windows). WBMM is therefore not an exact realization of global convolution, but a windowed design \emph{inspired} by the equivalence. The three subsequent components each address a trade-off introduced by this restriction: inter-block $3 \times 3$ depthwise mixing restores cross-window connectivity (\cref{sec:cross_window}); hierarchical reparameterization recovers multi-scale context (\cref{sec:hier_reparam}); and the input-independence of $M$ enables zero-overhead inference caching (\cref{sec:caching}).

\subsection{Windowed Batch Matrix Multiplication}
\label{sec:wbmm_algorithm}

Building on the equivalence theorem and the above design choice, we design WBMM to efficiently process arbitrary-sized feature maps through window partitioning and relative position indexing. \Cref{alg:wbmm} presents the complete algorithm.

\begin{algorithm}[t]
\caption{WBMM Forward Pass with Optional Caching}
\label{alg:wbmm}
\begin{algorithmic}[1]
\REQUIRE Input $X \in \mathbb{R}^{B \times C \times H \times W}$, window size $(w_h, w_w)$
\REQUIRE Learnable parameters: $R \in \mathbb{R}^{C \times (2w_h-1)(2w_w-1)}$
\REQUIRE Precomputed buffer: $I \in \mathbb{Z}^{d \times d}$
\REQUIRE Flag: \texttt{use\_cache}
\STATE \textbf{// Step 1: Window Partitioning with Zero-Padding}
\STATE Pad $X$ to dimensions divisible by $(w_h, w_w)$ if needed
\STATE Partition into $N_h \times N_w$ non-overlapping windows
\STATE $X_{\text{batch}} \gets \text{permute\_and\_reshape}(X) \in \mathbb{R}^{C \times N \times d}$, where $N = B \cdot N_h \cdot N_w$ \quad \textit{// $C$ is placed first so that PyTorch \texttt{bmm} treats channels as the batch axis and shares $M_c$ across all $N$ windows of channel $c$}
\STATE \textbf{// Step 2: Construct Weight Matrix}
\IF{\texttt{use\_cache} \AND $M_{\text{cached}}$ exists}
    \STATE $M \gets M_{\text{cached}}$
\ELSE
    \STATE $M \gets R[:,\, I.\text{flatten()}].\text{view}(C, d, d)$
    \IF{\texttt{use\_cache}}
        \STATE $M_{\text{cached}} \gets M$
    \ENDIF
\ENDIF
\STATE \textbf{// Step 3: Batch Matrix Multiplication}
\STATE $Y_{\text{batch}} \gets X_{\text{batch}} \cdot M$
\STATE \textbf{// Step 4: Inverse Reshape}
\STATE $Y \gets \text{inverse\_reshape}(Y_{\text{batch}}) \in \mathbb{R}^{B \times C \times H \times W}$
\STATE \textbf{return} $Y$
\end{algorithmic}
\end{algorithm}

\subsubsection{Handling Arbitrary Input Sizes}

A key advantage of WBMM is its ability to handle \textbf{arbitrary input resolutions} through zero-padding. For any input with spatial dimensions $H \times W$, we compute the required padding as:
\begin{align}
    \text{pad}_h &= (w_h - H \bmod w_h) \bmod w_h, \\
    \text{pad}_w &= (w_w - W \bmod w_w) \bmod w_w.
\end{align}
This ensures the padded dimensions $H' = H + \text{pad}_h$ and $W' = W + \text{pad}_w$ are divisible by the window size. After processing, the padding is removed to restore the original output dimensions. \cref{app:padding_strategy} reports an empirical comparison against reflection and replication padding, showing that zero-padding is the most accurate among the three.

\subsubsection{Relative Position Bias Table and Indexing}

The core of WBMM is the \emph{relative position bias table} $R \in \mathbb{R}^{C \times (2w_h-1)(2w_w-1)}$, storing learnable weights for all unique relative offsets within a window.

We precompute the \emph{relative position index matrix} $I \in \mathbb{Z}^{d \times d}$:
\begin{align}
    \delta_h &= h_i - h_j, \quad \delta_w = w_i - w_j, \\
    I[i, j] &= (\delta_h + w_h - 1) \cdot (2w_w - 1) + (\delta_w + w_w - 1).
    \label{eq:index_computation}
\end{align}

The weight matrix is constructed via index selection:
\begin{equation}
    M = R[:,\, I.\texttt{flatten()}].\texttt{view}(C, d, d).
    \label{eq:index_select}
\end{equation}

\subsubsection{Why WBMM Accelerates with Larger Windows}

A critical finding is that WBMM \emph{speeds up} when window size increases from $7 \times 7$ to $14 \times 14$---the \emph{opposite} behavior to depthwise convolutions. The reason lies in arithmetic intensity. For a window of size $w \times w$: data reads are $O(w^2)$ elements per window, computation is $O(w^4)$ multiply-add operations, yielding arithmetic intensity of $O(w^2)$ FLOPs per loaded element (the single weight matrix $M$ is shared across all $N$ windows in the batch, so its $O(w^4)$ load cost is amortized over $N$ windows and becomes negligible relative to the per-window input read of $O(w^2)$; hence the per-window arithmetic intensity is dominated by input reads). Larger windows provide higher arithmetic intensity, allowing GPU compute units to be more fully utilized.

\subsubsection{Why WBMM Scales with Feature Map Size}

Unlike LKA-accelerated convolutions whose fixed tiling no longer matches large feature maps, WBMM's efficiency \emph{improves} with larger feature maps because: (1)~more windows provide better GPU occupancy and parallelism; (2)~the batched matrix multiplication amortizes fixed overhead across more windows; (3)~memory access remains regular regardless of total feature map size.

\subsubsection{Inference Caching Mechanism}
\label{sec:caching}

During training, $M$ must be reconstructed at each forward pass because gradients flow through $R$. During inference, $M$ is constant and can be computed once and cached, providing additional speedup on small feature maps where index operation overhead is relatively higher. We refer to the two modes as \textbf{WBMM-NC} (no-cache; $M$ reconstructed each forward pass, used in training) and \textbf{WBMM-C} (cached; $M$ computed once at inference); they produce identical outputs and differ only in whether $M$ is recomputed.

\subsection{Inter-Block Cross-Window Communication}
\label{sec:cross_window}

Non-overlapping windows cannot directly exchange information. We address this by placing lightweight $3 \times 3$ depthwise convolutions as \emph{separate blocks} between WBMM blocks in an \textbf{inter-block} design (schematically, $\text{Block}_n = \text{WBMM}$, $\text{Block}_{n+1} = \text{DWConv}_{3 \times 3}$); the actual stage-wise patterns used in our models (e.g., $[\text{W,D,W}]$, $[\text{W}{\leftrightarrow}\text{D}]{\times}9$, $[\text{W,W,W}]$) are task-specific and listed in \cref{tab:stage_patterns}.
How densely these $3 \times 3$ blocks are interleaved with WBMM, and at which stages, is task-specific; we determine the patterns empirically in \cref{sec:ablation_architecture}. Keeping WBMM and $3 \times 3$ convolutions in \emph{separate} blocks---rather than pairing a $3 \times 3$ with WBMM inside every block---lets each operator run as a clean, highly optimized kernel, improving training and inference speed through more regular memory access and better operator fusion while retaining strong accuracy.

\subsection{Hierarchical Window Reparameterization}
\label{sec:hier_reparam}

For dense prediction tasks requiring multi-scale context, we introduce hierarchical window reparameterization combining global and local window processing (abbreviated ``Hier'' in tables; equivalently ``G+L'' in \cref{tab:ablation_window}). The detailed algorithm is in \cref{app:hier_algorithm}.

\textbf{Dual-Scale Bias Tables.}
We maintain two learnable tables: $R_g \in \mathbb{R}^{C \times (2w_g-1)^2}$ for large windows and $R_l \in \mathbb{R}^{C \times (2w_l-1)^2}$ for smaller sub-windows, where $w_g = 2w_l$.

\textbf{Weight Fusion.}
Each global window decomposes into a $2 \times 2$ grid of local sub-windows, indexed by $s \in \{0,1,2,3\}$. During training the two scales are processed independently, each with an identity shortcut; at inference they collapse into a single matrix that adds the local pattern $M_l$ to the four diagonal blocks of the global matrix $M_g$:
\begin{equation}
M_{\text{fused}}[c,\, B_{s,s}] = M_g[c,\, B_{s,s}] + M_l[c], \quad s \in \{0,1,2,3\},
\end{equation}
while the off-diagonal cross-sub-window blocks are left to $M_g$. The residual shortcuts are absorbed as identity terms in the fused matrix; the exact form and indexing are given in \cref{app:hier_algorithm}. The fusion is a one-time tensor addition performed when the model is loaded, after which $M_{\text{fused}}$ is cached; hierarchical reparameterization therefore adds no runtime cost over single-scale WBMM \emph{at the same window size} ($w = 14$).

\subsection{Multi-Kernel Fusion for Minimal Feature Maps}
\label{sec:multikernel_fusion}

For the S4 stage of Pico and Nano variants processing $7 \times 7$ feature maps, we introduce parallel depthwise convolution paths fused with the WBMM main branch: $Y = \text{WBMM}(X) + \text{BN}_1(\text{DW}_5(X)) + \text{BN}_2(\text{DW}_3(X))$, where $\text{DW}_k$ denotes kernel size $k \times k$. At inference, the parallel DW paths are fused into the WBMM weight matrix $M$, achieving a 0.2 percentage point improvement in Top-1 over single-path WBMM at zero additional inference cost.

\section{Experiments}
\label{sec:experiments}

We conduct experiments on ImageNet-1K~\cite{deng2009imagenet} classification, ADE20K~\cite{zhou2019semantic} semantic segmentation with UPerNet~\cite{xiao2018unified}, and COCO~\cite{lin2014microsoft} object detection with Cascade Mask R-CNN~\cite{cai2019cascade}.

\textbf{Implementation details.}
All experiments use \textbf{FP32 precision}; training runs on \textbf{8$\times$ NVIDIA A800 GPUs} (80GB) and inference benchmarks on a \textbf{single A800} unless otherwise specified. The implementation follows the UniRepLKNet codebase with identical training protocols, hyperparameters, and data preprocessing; our only changes are replacing large kernels with WBMM and adding inter-block mixing. We use 300-epoch ImageNet training with AdamW~\cite{loshchilov2019decoupled}, 160k iterations for ADE20K, and a $3\times$ schedule for COCO. Ablation metrics are mean$\pm$std over three runs; full configurations are in \cref{app:training_config}.

\textbf{Why compare with UniRepLKNet.}
UniRepLKNet~\cite{ding2024unireplknet} identified $13 \times 13$ as the \emph{optimal} kernel size through extensive cross-modal ablations, making it a rigorous state-of-the-art baseline for validating WBMM. Comparisons against further baselines---SLaK-style decomposition, non-overlapping window attention, and a ConvNeXt-T drop-in replacement---are in \cref{app:controlled_comparison}.

\subsection{Comprehensive Operator-Level Benchmark}
\label{sec:operator_benchmark}

To validate WBMM's efficiency, we conduct operator-level benchmarks on \textbf{single-layer feature maps}. This isolated setup makes the conclusions \emph{architecture-agnostic}: the speedup characteristics hold regardless of whether WBMM is deployed in UniRepLKNet, ConvNeXt, or other CNN architectures.

All measurements use 256 channels, FP32 precision on a single NVIDIA A800 GPU (80GB), with 5 independent runs and IQR-based outlier removal. We present batch=128 results here; complete results across all batch sizes (4, 16, 64, 128, 256) are provided in \cref{app:benchmark_all_batch}.

\begin{figure}[t]
  \centering
  \includegraphics[width=\columnwidth]{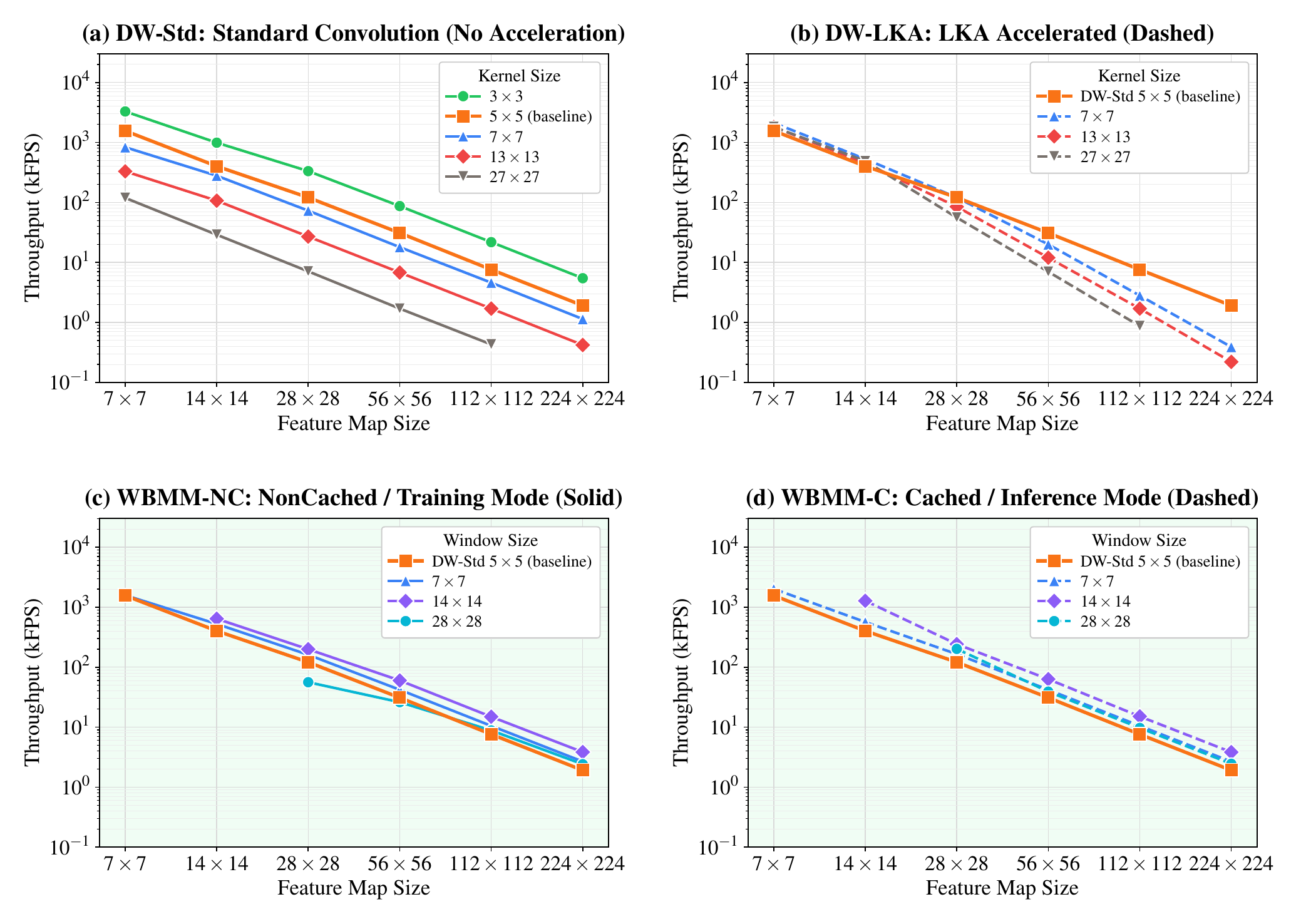}
  \caption{\textbf{Operator-level benchmark} (batch=128, 256 channels, FP32, single A800 GPU). DW-Std $5 \times 5$ serves as baseline; solid lines denote standard/non-cached and dashed lines accelerated/cached variants. See text for detailed analysis.}
  \label{fig:benchmark}
\end{figure}

\subsubsection{Key Findings from Benchmark Analysis}

\Cref{fig:benchmark} presents the comprehensive benchmark results across four configurations (batch=128), revealing several important findings.

\paragraph{Finding 1: Standard Convolution Degrades 71--78\%.}
Standard depthwise convolutions (DW-Std, using implicit GEMM~\cite{chetlur2014cudnn} via PyTorch~\cite{paszke2019pytorch}/cuDNN) consistently degrade by 71--78\% when kernel size increases from $5 \times 5$ to $13 \times 13$, across all tested batch sizes and feature map resolutions ($\geq 28 \times 28$). For $27 \times 27$ kernels, degradation reaches 92--94\%. This degradation is \emph{inherent} to the gather-based computation pattern and represents the baseline efficiency that any acceleration technique must improve upon.

\paragraph{Finding 2: LKA Becomes Counterproductive on Large Feature Maps.}
The Large Kernel Acceleration CUDA kernels (LKA), employed by RepLKNet and UniRepLKNet for training workloads, use a fixed tiling tuned for small feature maps that no longer matches the workload at large resolution, so the kernels fall behind the standard baseline. At $224 \times 224$ resolution and batch=128, LKA $7 \times 7$ is \textbf{80\% slower} than baseline, and LKA $13 \times 13$ is \textbf{89\% slower}. These kernels should be avoided for inference on large feature maps.

\paragraph{Finding 3: WBMM Accelerates with Larger Windows.}
Unlike depthwise convolutions that degrade with kernel size, WBMM \emph{accelerates} with larger window sizes. Compared to the DW-Std $5 \times 5$ baseline, WBMM-C $14 \times 14$ (window) provides a $7.8\times$ larger \emph{per-layer receptive field} (196 vs.\ 25 positions per layer) while achieving faster speed on feature maps $\geq 28 \times 28$ at batch $\geq 16$ (and $\geq 56 \times 56$ at all tested batch sizes; see \cref{tab:batch_comparison}).

\paragraph{Finding 4: WBMM's Advantage Scales with Batch Size.}
\Cref{tab:batch_comparison} shows that WBMM-C $14 \times 14$ (window) achieves increasing speedup over DW-Std $5 \times 5$ as batch size grows: from $0.45\times$ at batch=4 to $\mathbf{3.72\times}$ at batch=256 on $14 \times 14$ \emph{feature maps} (note: window size and feature-map size coincide in this column), confirming WBMM's batch-independent weight construction advantage.

\begin{table}[t]
\centering
\caption{WBMM-C $14 \times 14$ speedup relative to DW-Std $5 \times 5$ baseline across different batch sizes and feature map resolutions. WBMM's advantage increases with batch size due to batch-independent weight construction.}
\label{tab:batch_comparison}
\small
\setlength{\tabcolsep}{3pt}
\begin{tabular}{l|ccccc}
\toprule
\textbf{Batch} & \textbf{14$\times$14} & \textbf{28$\times$28} & \textbf{56$\times$56} & \textbf{112$\times$112} & \textbf{224$\times$224} \\
\midrule
4 & 0.45$\times$ & 0.37$\times$ & 1.41$\times$ & 1.64$\times$ & 2.02$\times$ \\
16 & 0.62$\times$ & 1.56$\times$ & 1.80$\times$ & 2.00$\times$ & 2.07$\times$ \\
64 & 1.86$\times$ & 1.85$\times$ & 2.01$\times$ & 2.01$\times$ & 2.03$\times$ \\
128 & 3.17$\times$ & 2.00$\times$ & 2.06$\times$ & 1.97$\times$ & 2.01$\times$ \\
\rowcolor[gray]{0.9}
256 & \textbf{3.72$\times$} & \textbf{2.02$\times$} & \textbf{2.02$\times$} & \textbf{1.95$\times$} & \textbf{--} \\
\bottomrule
\end{tabular}

\smallskip
\raggedright\scriptsize
Note: ``--'' indicates OOM at batch=256 for DW-Std.
\end{table}

\subsection{Ablation Studies}
\label{sec:ablation_studies}

We establish UniRepLKNet-T as our baseline model and systematically examine critical design components.

\subsubsection{Feature Extraction and Window Interaction}

\begin{table}[t]
\centering
\caption{Ablation studies on feature extraction and window interaction. Of the three window-interaction variants, the serial $3 \times 3$ attains the best mIoU (with Top-1 comparable to the other two) but the highest cost; the configuration actually used in our models is selected from the across-block exploration in \cref{tab:configurations}.}
\label{tab:ablation_methods}
\small
\setlength{\tabcolsep}{2.5pt}
\begin{tabular}{l|l|c|c}
\toprule
\textbf{Component} & \textbf{Method} & \textbf{Top-1} & \textbf{mIoU} \\
& & \textbf{(\%)} & \textbf{(\%)} \\
\midrule
\multirow{3}{*}{\makecell[l]{Feature\\Extract.}} & Full connection & 80.89$\pm$0.02 & 44.02$\pm$0.05 \\
& Relative pos.\ bias & 82.71$\pm$0.04 & 45.50$\pm$0.07 \\
& Rel.\ pos.\ + shortcut & 82.72$\pm$0.03 & 45.79$\pm$0.07 \\
\midrule
\multirow{3}{*}{\makecell[l]{Window\\Interact.}} & $3 \times 3$ dw parallel & 83.11$\pm$0.02 & 47.90$\pm$0.06 \\
& $3 \times 3$ dw serial & 83.21$\pm$0.03 & 48.01$\pm$0.03 \\
& Half ch.\ $3 \times 3$ dw & 83.22$\pm$0.01 & 46.21$\pm$0.04 \\
\bottomrule
\end{tabular}
\end{table}

\Cref{tab:ablation_methods} presents our findings. For feature extraction, incorporating position-aware weights through relative position bias significantly improved performance (82.71$\pm$0.04\% Top-1, 45.50$\pm$0.07\% mIoU) compared to basic full connection (80.89$\pm$0.02\% Top-1, 44.02$\pm$0.05\% mIoU). The substantial performance gap (nearly 2 percentage points) highlights the importance of spatial relationship modeling.

For window interaction, we compare three ways of combining a $3 \times 3$ depthwise convolution with WBMM: a \emph{parallel} branch, a \emph{serial} arrangement in which the $3 \times 3$ convolution follows WBMM, and \emph{channel splitting} in which half the channels take the $3 \times 3$ path. The serial arrangement attains the best mIoU (48.01$\pm$0.03\%), ahead of parallel (47.90$\pm$0.06\%) and well above channel splitting (46.21$\pm$0.04\%). On Top-1, serial (83.21$\pm$0.03\%) and channel splitting (83.22$\pm$0.01\%) are essentially tied and both edge out parallel (83.11$\pm$0.02\%). Since channel splitting falls clearly behind on dense prediction while serial leads parallel on both metrics, the serial integration of local and global features provides more effective information flow than fusing them in parallel. Applying a $3 \times 3$ after \emph{every} WBMM, however, increases both training and inference time, so we do \emph{not} adopt this per-block serial design; instead, the mixed block patterns in \cref{tab:configurations} distribute WBMM and $3 \times 3$ across stages, which---tuned per task---match the per-block serial design's classification accuracy (the $0.01$\% Top-1 gap is negligible) while \emph{surpassing} its mIoU ($48.32$ vs.\ $48.01$), all at lower training and inference cost.

\subsubsection{Optimal Architecture Design}
\label{sec:ablation_architecture}

\begin{table}[t]
\centering
\caption{Architecture exploration. ``W'' denotes WBMM block ($7 \times 7$ window), ``D'' denotes $3 \times 3$ depthwise convolution. Highlighted rows show optimal configurations.}
\label{tab:configurations}
\footnotesize
\setlength{\tabcolsep}{2.5pt}
\begin{tabular}{cccc|cc}
\toprule
\multicolumn{4}{c|}{\textbf{Block Pattern by Stage}} & \multicolumn{2}{c}{\textbf{WBMM}} \\
\cmidrule(lr){1-4} \cmidrule(lr){5-6}
\textbf{S1} & \textbf{S2} & \textbf{S3} & \textbf{S4} & \textbf{Top-1} & \textbf{mIoU} \\
\midrule
W,W,W & W,W,W & W$\leftrightarrow$W & W,W,W & 82.72 & 45.81 \\
W,W,W & W,W,W & W$\leftrightarrow$D & W,W,W & 82.99 & 47.18 \\
D,D,D & W,W,W & W$\leftrightarrow$D & W,W,W & 83.18 & 47.59 \\
\rowcolor[gray]{0.9} D,D,D & W,D,W & W$\leftrightarrow$D & W,W,W & \textbf{83.21} & 47.63 \\
W,D,W & W,W,W & W$\leftrightarrow$D & W,W,W & 83.02 & 47.52 \\
\rowcolor[gray]{0.9} W,D,W & W,D,W & W$\leftrightarrow$D & W,W,W & 83.02 & \textbf{48.32} \\
W,D,W & W,D,W & W$\leftrightarrow$D & W,D,W & 83.11 & 47.89 \\
D,W,D & W,D,W & W$\leftrightarrow$D & W,W,W & 82.98 & 48.19 \\
\bottomrule
\end{tabular}

\smallskip
\footnotesize{Note: S1--S4 represent network stages at different resolutions. ``$\leftrightarrow$'' indicates alternating operations.}
\end{table}

\Cref{tab:configurations} explores these mixed block patterns across stages. For image classification, performance peaked (83.21\% Top-1) with configuration $[\text{D,D,D}]_{S1} \to [\text{W,D,W}]_{S2} \to [\text{W}{\leftrightarrow}\text{D}]_{S3} \to [\text{W,W,W}]_{S4}$. For semantic segmentation, optimal performance (48.32\% mIoU) was achieved with $[\text{W,D,W}]_{S1} \to [\text{W,D,W}]_{S2} \to [\text{W}{\leftrightarrow}\text{D}]_{S3} \to [\text{W,W,W}]_{S4}$. This task-dependent divergence reflects different downstream use of S1 features: classification uses only the globally pooled deepest feature (favoring local $3 \times 3$ extraction at S1), whereas FPN/UPerNet-style decoders fuse S1 directly with upsampled deeper features, benefiting from the broader spatial context WBMM provides at the highest-resolution stage.

\subsubsection{Hierarchical Window Reparameterization}

\begin{table}[t]
\centering
\caption{Ablation on hierarchical window reparameterization on ADE20K with WBMM-T. ``G+L'' denotes hierarchical reparameterization combining global $14 \times 14$ windows with local $7 \times 7$ sub-windows.}
\label{tab:ablation_window}
\small
\setlength{\tabcolsep}{3pt}
\begin{tabular}{l|cccc|c}
\toprule
\textbf{Configuration} & \textbf{S1} & \textbf{S2} & \textbf{S3} & \textbf{S4} & \textbf{mIoU (\%)} \\
\midrule
Pure $7 \times 7$ (baseline) & 7 & 7 & 7 & 7 & 48.0$\pm$0.03 \\
Pure $14 \times 14$ & 14 & 14 & 14 & 14 & 47.9$\pm$0.05 \\
Pure $14 \times 14$ (S4: 7) & 14 & 14 & 14 & 7 & 47.8$\pm$0.04 \\
\rowcolor[gray]{0.9} Optimal Config & G+L & G+L & 14 & 7 & \textbf{48.8$\pm$0.06} \\
All Hierarchical & G+L & G+L & G+L & 7 & 48.1$\pm$0.05 \\
\bottomrule
\end{tabular}
\end{table}

\Cref{tab:ablation_window} shows that applying hierarchical reparameterization (G+L) selectively to S1--S2 stages achieves 48.8$\pm$0.06\% mIoU, providing \textbf{0.8\%} improvement over the $7 \times 7$ baseline and 0.9\% over pure $14 \times 14$ configuration.

\subsection{ImageNet Classification}
\label{sec:imagenet}

\begin{table}[t]
\scriptsize
\centering
\caption{Comparison on ImageNet-1K classification. WBMM achieves 1.31--1.88$\times$ training speedup with comparable accuracy. Training: 8$\times$ A800 GPUs. LKA: Large Kernel Acceleration. Base-scale results (WBMM-B) are reported in \cref{app:base_scale}.}
\label{tab:imagenet}
\setlength{\tabcolsep}{1.5pt}
\begin{tabular}{lcccccc}
\toprule
\textbf{Model} & \textbf{Params} & \textbf{FLOPs} & \textbf{Mem.} & \textbf{Time} & \textbf{Acc.} & \textbf{Speedup} \\
& \textbf{(M)} & \textbf{(G)} & \textbf{(GB)} & \textbf{(m:s)} & \textbf{(\%)} & \\
\midrule
\rowcolor[gray]{0.9} WBMM-P & 10.6 & 1.6 & 8.61 & 3:46 & 80.3 & Baseline \\
UniRepLKNet-P (LKA) & 10.7 & 1.6 & 9.45 & 4:56 & 80.2 & 1.31$\times$ \\
UniRepLKNet-P (no LKA) & 10.7 & 1.6 & 9.45 & 5:53 & 80.2 & 1.56$\times$ \\
\midrule
\rowcolor[gray]{0.9} WBMM-N & 18.1 & 2.7 & 10.01 & 4:12 & 81.7 & Baseline \\
UniRepLKNet-N (LKA) & 18.3 & 2.8 & 11.33 & 6:17 & 81.6 & 1.50$\times$ \\
UniRepLKNet-N (no LKA) & 18.3 & 2.8 & 11.33 & 7:54 & 81.6 & 1.88$\times$ \\
\midrule
\rowcolor[gray]{0.9} WBMM-T & 31.0 & 4.8 & 15.04 & 6:18 & 83.2 & Baseline \\
UniRepLKNet-T (LKA) & 31.0 & 4.9 & 16.58 & 9:03 & 83.2 & 1.44$\times$ \\
UniRepLKNet-T (no LKA) & 31.0 & 4.9 & 16.58 & 11:03 & 83.2 & 1.75$\times$ \\
\midrule
\rowcolor[gray]{0.9} WBMM-S & 55.6 & 9.0 & 20.43 & 8:35 & 83.9 & Baseline \\
UniRepLKNet-S (LKA) & 55.6 & 9.1 & 22.27 & 12:10 & 83.9 & 1.42$\times$ \\
UniRepLKNet-S (no LKA) & 55.6 & 9.1 & 22.27 & 14:16 & 83.9 & 1.66$\times$ \\
\bottomrule
\end{tabular}
\end{table}

\Cref{tab:imagenet} compares WBMM and UniRepLKNet on ImageNet-1K. WBMM matches or exceeds accuracy while offering significant efficiency gains: training speedup of \textbf{1.31--1.88$\times$}, memory reduction of 8--12\%, and comparable accuracy (within 0.1\%).

\subsection{Semantic Segmentation on ADE20K}
\label{sec:segmentation}

We evaluate WBMM on ADE20K semantic segmentation using UPerNet~\cite{xiao2018unified} with both Tiny and Small model variants. Semantic segmentation processes high-resolution feature maps with variable-sized inputs, which WBMM handles seamlessly via its zero-padding mechanism (\cref{sec:wbmm_algorithm}).

\begin{table}[t]
\centering
\caption{Semantic segmentation on ADE20K validation set with UPerNet. Both Tiny (T) and Small (S) variants are evaluated. WBMM with hierarchical reparameterization achieves \textbf{higher mIoU} than UniRepLKNet.}
\label{tab:ade20k_accuracy}
\scriptsize
\setlength{\tabcolsep}{3pt}
\begin{tabular}{l|cc|cc|cc|cc}
\toprule
& \multicolumn{2}{c|}{\textbf{Params (M)}} & \multicolumn{2}{c|}{\textbf{FLOPs (G)}} & \multicolumn{2}{c|}{\textbf{mIoU SS}} & \multicolumn{2}{c}{\textbf{mIoU MS}} \\
\cmidrule(lr){2-3} \cmidrule(lr){4-5} \cmidrule(lr){6-7} \cmidrule(lr){8-9}
\textbf{Method} & \textbf{T} & \textbf{S} & \textbf{T} & \textbf{S} & \textbf{T} & \textbf{S} & \textbf{T} & \textbf{S} \\
\midrule
UniRepLKNet & 62 & 87 & 946 & 1036 & 48.6 & 50.5 & 49.1 & 51.0 \\
WBMM ($7 \times 7$) & 62 & 87 & 944 & 1033 & 48.3 & 50.2 & 48.8 & 50.5 \\
\rowcolor[gray]{0.9} WBMM (Hier) & 66 & 92 & 948 & 1038 & \textbf{48.8} & \textbf{50.6} & \textbf{49.3} & \textbf{51.2} \\
\bottomrule
\end{tabular}

\smallskip
\scriptsize
Note: ``Hier'' denotes hierarchical reparameterization combining $14 \times 14$ windows with $7 \times 7$ sub-windows.
\end{table}

\Cref{tab:ade20k_accuracy} shows that WBMM with hierarchical reparameterization achieves higher mIoU than UniRepLKNet on both Tiny and Small variants. The pure $7 \times 7$ variant matches FLOPs and parameters of UniRepLKNet within $0.3$ mIoU, and adding hierarchical reparameterization closes this gap without slowing inference.

\subsubsection{Inference Speed Analysis}

We provide comprehensive throughput measurements for both Tiny and Small models across multiple input resolutions (512$\times$512, 512$\times$1024, 1024$\times$1024) and batch sizes (2, 4, 8, 16) in \cref{app:seg_speed}, demonstrating three key findings: (1)~WBMM with $7 \times 7$ windows approaches the theoretical upper bound of Conv $3 \times 3$ speed while enabling dense intra-window interactions; (2)~Both WBMM configurations---$7 \times 7$ windows and hierarchical reparameterization with $14 \times 14$ windows---deliver consistent speedup over UniRepLKNet across all tested resolutions and batch sizes; (3)~WBMM excels particularly at high resolution (e.g., $1024 \times 1024$) where LKA-accelerated UniRepLKNet suffers from severe performance degradation.

\subsection{Object Detection on COCO}
\label{sec:detection}

We further evaluate WBMM on COCO object detection with Cascade Mask R-CNN~\cite{cai2019cascade}, using the same dense-prediction architecture as segmentation. WBMM with hierarchical reparameterization achieves \textbf{higher AP} than UniRepLKNet: 51.9\%/45.1\% (box/mask) for Tiny and 53.1\%/46.1\% for Small (\cref{tab:object_detection}), while maintaining competitive inference speed.

\begin{table}[t]
\centering
\caption{Object detection on COCO with Cascade Mask R-CNN. WBMM with hierarchical reparameterization achieves \textbf{higher AP} than UniRepLKNet.}
\label{tab:object_detection}
\scriptsize
\setlength{\tabcolsep}{3pt}
\begin{tabular}{l|cc|cc|cc|cc}
\toprule
& \multicolumn{2}{c|}{\textbf{Params (M)}} & \multicolumn{2}{c|}{\textbf{FLOPs (G)}} & \multicolumn{2}{c|}{\textbf{AP$^{box}$}} & \multicolumn{2}{c}{\textbf{AP$^{mask}$}} \\
\cmidrule(lr){2-3} \cmidrule(lr){4-5} \cmidrule(lr){6-7} \cmidrule(lr){8-9}
\textbf{Method} & \textbf{T} & \textbf{S} & \textbf{T} & \textbf{S} & \textbf{T} & \textbf{S} & \textbf{T} & \textbf{S} \\
\midrule
UniRepLKNet & 89 & 113 & 749 & 835 & 51.8 & 53.0 & 44.9 & 45.9 \\
WBMM ($7 \times 7$) & 89 & 113 & 747 & 833 & 51.6 & 52.8 & 44.8 & 45.6 \\
\rowcolor[gray]{0.9} WBMM (Hier) & 92 & 118 & 751 & 837 & \textbf{51.9} & \textbf{53.1} & \textbf{45.1} & \textbf{46.1} \\
\bottomrule
\end{tabular}

\smallskip
\scriptsize
Note: FLOPs computed at $1280 \times 800$ resolution.
\end{table}

\subsection{Hardware Generalization}
\label{sec:hardware}

We evaluate WBMM on diverse hardware platforms; detailed benchmarks are in \cref{app:hardware_generalization}. On a single \textbf{NVIDIA A800 GPU} (FP32, batch=128), WBMM achieves 1.01--1.28$\times$ over LKA-accelerated UniRepLKNet and 1.23--1.41$\times$ over the non-accelerated version. On \textbf{Intel Core i7-13700K CPU} (FP32), where specialized large kernel acceleration is unavailable, WBMM achieves 1.03--1.48$\times$ via highly optimized BLAS libraries. On \textbf{NVIDIA Jetson Orin Nano} (8GB), WBMM achieves 1.19--3.12$\times$ on GPU (FP16) and 1.33--2.44$\times$ on CPU (INT8), confirming compatibility with quantization.

\section{Conclusion}
\label{sec:conclusion}

We presented Windowed Batch Matrix Multiplication (WBMM), which accelerates
large-kernel convolutions by traversing a compact parameter table instead of
gathering scattered input neighborhoods, turning a memory-bound gather into a
compute-bound batched matrix multiplication. WBMM thus speeds up with larger
windows and with batch size (up to $3.72\times$ at batch=256), and---unlike
LKA-accelerated convolutions that slow down on large feature maps---maintains
consistent efficiency while providing a $7.8\times$ larger per-layer receptive
field than DW-Std $5 \times 5$. Combined with inter-block communication,
inference caching, and hierarchical reparameterization, WBMM matches
UniRepLKNet on ImageNet-1K and outperforms it on ADE20K and COCO with a
1.31--1.88$\times$ training speedup, while \textbf{generalizing across GPU, CPU,
and edge devices} without specialized kernels. We discuss limitations and
future extensions in \cref{app:limitations}.

\section*{Acknowledgements}

This work was supported in part by the Anhui Province Major Science and Technology Project under Grant JZ2024AKKZ0025. The numerical computations were carried out on the HPC Platform of Hefei University of Technology. The authors thank Professor Wengang Zhou of the University of Science and Technology of China for his valuable guidance on the writing of this manuscript, and the anonymous ICML 2026 reviewers for their constructive comments that substantially improved the paper.

\section*{Impact Statement}

This paper presents work advancing the field of Machine Learning. The efficiency gains from WBMM could reduce energy consumption and carbon footprint in training and deploying vision models, potentially democratizing access to large-receptive-field models for researchers with limited computational resources.

\bibliography{wbmm}
\bibliographystyle{icml2026}

%%%%%%%%%%%%%%%%%%%%%%%%%%%%%%%%%%%%%%%%%%%%%%%%%%%%%%%%%%%%%
% APPENDIX
%%%%%%%%%%%%%%%%%%%%%%%%%%%%%%%%%%%%%%%%%%%%%%%%%%%%%%%%%%%%%
\newpage
\appendix
\onecolumn

\section{Appendix Overview}
\label{app:overview}

This appendix provides comprehensive supplementary materials organized as follows:
\begin{itemize}
    \item \textbf{\Cref{app:fair_comparison}}: Fair comparison protocol ensuring reproducibility.
    \item \textbf{\Cref{app:controlled_comparison}}: Controlled cross-operator comparison (SLaK, DW variants) under identical architecture.
    \item \textbf{\Cref{app:win_attn_compare}}: WBMM vs.\ non-overlapping window attention under matched parameters.
    \item \textbf{\Cref{app:convnext}}: ConvNeXt-T drop-in replacement experiment.
    \item \textbf{\Cref{app:base_scale}}: Base-scale (\textasciitilde100M) ImageNet-1K results.
    \item \textbf{\Cref{app:padding_strategy}}: Padding-strategy ablation on ADE20K.
    \item \textbf{\Cref{app:weight_analysis}}: Analysis of learned weight matrices ($M$).
    \item \textbf{\Cref{app:benchmark_all_batch}}: Complete operator-level benchmarks across all batch sizes.
    \item \textbf{\Cref{app:seg_speed}}: Detailed segmentation inference speed analysis.
    \item \textbf{\Cref{app:hardware_generalization}}: Hardware generalization benchmarks on GPU, CPU, and edge devices.
    \item \textbf{\Cref{app:training_config}}: Complete training configurations.
    \item \textbf{\Cref{app:architecture}}: Model architecture specifications.
    \item \textbf{\Cref{app:hier_algorithm}}: Hierarchical reparameterization algorithm details.
    \item \textbf{\Cref{app:limitations}}: Limitations and future work.
\end{itemize}

\section{Fair Comparison Protocol}
\label{app:fair_comparison}

To ensure fair evaluation, we maintain identical framework configurations: same backbone structure, training protocols, hyperparameters, data preprocessing, and optimization strategies. The modifications include: (1)~replacing large depthwise convolution kernels with WBMM operations, and (2)~adapting the mixing strategy of how WBMM blocks and $3 \times 3$ depthwise convolution blocks are arranged across different layers. The implementation follows the UniRepLKNet codebase.

\textbf{Note on Structural Reparameterization.}
UniRepLKNet employs extensive structural reparameterization with multiple parallel branches during training, adding significant computational overhead. WBMM uses minimal reparameterization (only applied to S4 stage for Pico/Nano variants in classification tasks). Therefore, reported training speedups represent conservative estimates.

\section{Controlled Cross-Operator Comparison}
\label{app:controlled_comparison}

To isolate WBMM's contribution from architecture-level design choices, we conduct a strictly controlled experiment in which all configurations share identical architecture, training protocol, hyperparameters, data preprocessing, and optimization; \emph{only the spatial mixing operator inside the spatial blocks differs}. To enable a single comparison that is meaningful for both classification and dense prediction, all entries in \cref{tab:cross_operator} use the dense-prediction architecture variant (Tiny$^*$ in \cref{tab:stage_patterns}: $[\text{W,D,W}]$ at S1, $[\text{W,D,W}]$ at S2, $[\text{W}{\leftrightarrow}\text{D}]{\times}9$ at S3, $[\text{W,W,W}]$ at S4), in which extra spatial-mixing blocks are placed at the high-resolution stage S1 to favor segmentation/detection. This is why the WBMM-T ($w{=}7$) Top-1 in \cref{tab:cross_operator} (83.0) is slightly lower than the value reported in \cref{tab:imagenet} of the main text (83.2): the classification-tuned architecture used there (Tiny in \cref{tab:stage_patterns}: $[\text{D,D,D}]$ at S1) replaces the S1 WBMM blocks with pure $3 \times 3$ depthwise convolutions, which is empirically better for ImageNet classification (see also \cref{tab:configurations}). The same architecture is applied to every operator variant in \cref{tab:cross_operator}, so the comparison among operators remains apples-to-apples. Training uses $8\times$ A800 GPUs; inference speed is measured on a single A800 (FP32, batch=128). ``Hier'' denotes hierarchical window reparameterization (see \cref{sec:hier_reparam}) with $14 \times 14$ global windows and $7 \times 7$ local sub-windows fused at inference into a single matrix without slowing inference. ``LKA'' denotes the Large Kernel Acceleration CUDA kernels of RepLKNet/UniRepLKNet (used consistently throughout this paper); ``reparam'' denotes the UniRepLKNet structural reparameterization with \texttt{kernel\_sizes=[5,7,3,3,3]} and \texttt{dilates=[1,2,3,4,5]}. ``SLaK-style'' uses parallel rectangular decomposition ($5 \times k + k \times 5 + 5 \times 5$ with $k \in \{51,49,47,13\}$).

\begin{table}[!htbp]
\centering
\caption{Controlled cross-operator comparison on the Tiny model. All entries share identical architecture, training, and hyperparameters; only the spatial operator differs. ``Time'' is per-epoch ImageNet training time (min:sec); ``Spd'' is inference throughput on a single A800.}
\label{tab:cross_operator}
\small
\setlength{\tabcolsep}{4pt}
\begin{tabular}{l|ccccccc}
\toprule
\textbf{Operator} & \textbf{Params (M)} & \textbf{FLOPs (G)} & \textbf{Mem (GB)} & \textbf{Time} & \textbf{Top-1 (\%)} & \textbf{mIoU (\%)} & \textbf{Spd (img/s)} \\
\midrule
\rowcolor[gray]{0.9} WBMM-T ($w{=}7$)            & 31.0 & 4.8 & 15.16 & 6:20  & 83.0 & 48.3 & 1833.1 \\
\rowcolor[gray]{0.9} WBMM-T ($w{=}14$, Hier)     & 33.0 & 5.1 & 15.87 & 6:31  & 83.2 & \textbf{48.8} & \textbf{1842.2} \\
$13 \times 13$ DW + LKA                          & 31.0 & 5.0 & 14.77 & 6:43  & 83.1 & 48.3 & 1661.6 \\
$13 \times 13$ DW (no LKA)                       & 31.0 & 5.0 & 14.77 & 9:11  & 83.1 & 48.3 & 1305.3 \\
$13 \times 13$ DW + reparam + LKA                & 31.6 & 5.1 & 17.23 & 10:21 & 83.2 & 48.7 & 1661.6 \\
$13 \times 13$ DW + reparam (no LKA)             & 31.6 & 5.1 & 17.23 & 12:46 & 83.2 & 48.7 & 1305.3 \\
SLaK-style (in WBMM design)                      & 32.1 & 5.4 & 16.19 & 15:27 & 83.2 & 47.2 & 772.1 \\
SLaK-Original (all stages)                       & 33.7 & 5.9 & 16.91 & 22:03 & 83.3 & 47.4 & 549.0 \\
Win7 Transformer (MLP $r{=}2.5$)                 & 31.2 & 5.1 & 16.65 & 8:38  & 83.3 & 48.1 & 1245.1 \\
\bottomrule
\end{tabular}

\smallskip
\raggedright\scriptsize
Note: All entries here use the dense-prediction architecture (Tiny$^*$ in \cref{tab:stage_patterns}: $[\text{W,D,W}]$ at S1) for an apples-to-apples comparison between operators on \emph{both} classification and segmentation. The classification-tuned architecture (Tiny in \cref{tab:stage_patterns}: $[\text{D,D,D}]$ at S1) is used in \cref{tab:imagenet} of the main text and yields the higher Top-1 (83.2) for WBMM-T ($w{=}7$). The slight Time/Mem difference vs.\ \cref{tab:imagenet} (6:20/15.16 vs.\ 6:18/15.04) reflects this architectural difference and is consistent within run-to-run variation.
\end{table}

\textbf{Key observations.}
(1)~\emph{WBMM $w{=}7$ vs.\ $13 \times 13$ DW.} Comparable classification (83.0 vs.\ 83.1) and identical mIoU (48.3), but WBMM is faster in training (6:20 vs.\ 6:43 with LKA / 9:11 without) and inference (1833 vs.\ 1662/1305 img/s) with lower memory.
(2)~\emph{WBMM $w{=}14$ Hier vs.\ $13 \times 13$ DW + reparam.} UniRepLKNet's complex reparameterization recovers $48.7$ mIoU at a severe training cost ($10{:}21$--$12{:}46$ vs.\ $6{:}31$) and $+1.36$~GB memory; WBMM Hier still leads $+0.1$~mIoU at much lower overhead.
(3)~\emph{WBMM vs.\ SLaK variants.} Despite an effective receptive field of $51$, both SLaK variants underperform on dense prediction ($47.2$/$47.4$ vs.\ $48.3$/$48.8$ mIoU) and are $2.4$--$3.4\times$ slower in both training and inference.
(4)~\emph{WBMM vs.\ window attention.} See \cref{app:win_attn_compare}.

\section{WBMM vs.\ Non-Overlapping Window Attention}
\label{app:win_attn_compare}

To compare input-independent (WBMM) and input-dependent (attention) weight construction \emph{at the operator level}, we replace only the WBMM blocks with non-overlapping $7 \times 7$ window self-attention blocks. The MLP expansion ratio of the attention variant is set to $2.5$ so that the parameter count and FLOPs match WBMM. Everything else---overall architecture, the inter-block $3 \times 3$ depthwise mixing, training schedule and hyperparameters---is held identical.

The relevant rows from \cref{tab:cross_operator}: Win7 Transformer (MLP $r{=}2.5$) reaches $31.2$~M params, $5.1$~G FLOPs, Top-1 $83.3$, mIoU $48.1$, with per-epoch time $8{:}38$ and inference speed $1245.1$ img/s. WBMM-T at matched scale reaches Top-1 $83.0$, mIoU $48.3$ ($w{=}7$) / $48.8$ ($w{=}14$, Hier), with training $6{:}20$/$6{:}31$ and inference $1833.1$/$1842.2$ img/s. Window attention obtains a small classification gain ($+0.3$ Top-1 under the dense-prediction architecture used in \cref{tab:cross_operator}; see the note under that table), but falls behind on dense prediction ($-0.2$ to $-0.7$ mIoU). WBMM is \textbf{1.32--1.36$\times$} faster in training and \textbf{1.47--1.48$\times$} faster in inference, with $\sim$0.8--1.5~GB lower memory.

\textbf{Where the gap comes from.} Window attention computes query--key products for every sample and every window at every forward pass ($O(B \cdot N_h N_w \cdot d^2)$, with $N_h N_w$ windows per sample), and its softmax couples positions in a way that requires online normalization when tiled. WBMM's weights are input-independent and constructed once at $O(C \cdot d^2)$, shared across all batches and windows; cached at inference, the cost is amortized to zero. The empirical gap is consistent with this asymptotic difference, and grows with batch size. \Cref{tab:wbmm_vs_attn} summarizes these property differences.

\begin{table}[!htbp]
\centering
\caption{Property comparison between WBMM and window attention.}
\label{tab:wbmm_vs_attn}
\small
\setlength{\tabcolsep}{6pt}
\begin{tabular}{l|l|l}
\toprule
\textbf{Property} & \textbf{WBMM} & \textbf{Window Attention} \\
\midrule
Weight construction & $O(C \cdot d^2)$, batch-independent & $O(B \cdot N_h N_w \cdot d^2)$ \\
Softmax             & None                          & Required \\
Batch scalability   & Cost-amortized ($M$ shared)   & Linear in $B \cdot N_h N_w$ \\
Tiling              & Simple (pure linear)          & Complex (online softmax) \\
Adaptive to input   & No (fixed at inference)       & Yes \\
\bottomrule
\end{tabular}
\end{table}

\textbf{When input-dependence may help.} Content-adaptive routing tasks (e.g., VQA, referring expression) and tasks with extreme intra-image variation (e.g., satellite imagery) may benefit from input-dependent weights. A lightweight input-dependent gate over WBMM's cached $M$ would offer a hybrid design and is a natural extension.

\section{Cross-Architecture Validation on ConvNeXt-T}
\label{app:convnext}

To verify that WBMM's efficiency is not specific to the UniRepLKNet backbone, we perform a drop-in replacement on ConvNeXt-T: all $7 \times 7$ depthwise convolutions are replaced with WBMM blocks ($w{=}7$) interleaved with $3 \times 3$ depthwise mixing blocks. No other change is made to the ConvNeXt training recipe.

\begin{table}[!htbp]
\centering
\caption{Cross-architecture drop-in replacement on ConvNeXt-T. ``Time'' is per-epoch ImageNet training time (min:sec).}
\label{tab:convnext}
\small
\setlength{\tabcolsep}{6pt}
\begin{tabular}{l|cccccc}
\toprule
\textbf{Model} & \textbf{Params (M)} & \textbf{FLOPs (G)} & \textbf{Mem (GB)} & \textbf{Top-1 (\%)} & \textbf{mIoU (\%)} & \textbf{Time} \\
\midrule
ConvNeXt-T ($7 \times 7$ DW) & 28.6 & 4.5 & 17.93 & 82.1 & 46.0 & 8:30 \\
\rowcolor[gray]{0.9} ConvNeXt-T $\rightarrow$ WBMM ($w{=}7$) + $3{\times}3$ mix & 29.1 & 4.4 & 18.20 & 82.1 & \textbf{46.2} & \textbf{5:08} \\
\bottomrule
\end{tabular}
\end{table}

As shown in \cref{tab:convnext}, classification accuracy is preserved exactly ($82.1$ Top-1), segmentation is slightly higher ($46.2$ vs.\ $46.0$ mIoU), and training is \textbf{$1.66\times$ faster} ($5{:}08$ vs.\ $8{:}30$). This confirms that WBMM's efficiency gains are architecture-agnostic and require no task-specific tuning.

\section{Base-Scale Results}
\label{app:base_scale}

We extend the comparison to the Base scale ($\sim$100~M parameters) on ImageNet-1K (300 epochs, identical hyperparameters), with results summarized in \cref{tab:base_scale}:

\begin{table}[!htbp]
\centering
\caption{Base-scale ImageNet-1K results.}
\label{tab:base_scale}
\small
\setlength{\tabcolsep}{6pt}
\begin{tabular}{l|ccccc}
\toprule
\textbf{Model} & \textbf{Params (M)} & \textbf{FLOPs (G)} & \textbf{Mem (GB)} & \textbf{Time} & \textbf{Top-1 (\%)} \\
\midrule
\rowcolor[gray]{0.9} WBMM-B ($w{=}7$) & 97.9 & 15.9 & 26.38 & \textbf{11:24} & \textbf{83.9} \\
UniRepLKNet-B   & 98.0 & 16.1 & 29.01 & 15:46 & 83.8 \\
\bottomrule
\end{tabular}
\end{table}

WBMM-B achieves \textbf{$1.38\times$ training speedup} (11:24 vs.\ 15:46 per epoch) with \textbf{9.1\% lower memory} (26.38 vs.\ 29.01 GB) and \textbf{$+0.1$\%} higher accuracy ($83.9$ vs.\ $83.8$). Notably, UniRepLKNet-B does not improve over its Small variant ($83.8$ vs.\ $83.9$) on ImageNet-1K alone, suggesting saturation at this data scale---the original paper uses ImageNet-22K pre-training for Base models. WBMM-B maintains Small-level accuracy on ImageNet-1K alone, showing slightly better scaling. ImageNet-22K pre-training experiments are planned future work.

\section{Padding-Strategy Ablation}
\label{app:padding_strategy}

WBMM relies on padding to align spatial dimensions with the window size. On ADE20K with UPerNet (WBMM-T Hier, $w{=}14$, variable input sizes), we compare three padding strategies. Zero-padding is empirically the best of the three strategies tested. Several factors contribute to this result: (1)~four-stage downsampling ($4\times$ to $32\times$) confines the padded region to $<0.1\%$ of the S4 feature map; (2)~UPerNet's multi-scale fusion dilutes any boundary artifacts; (3)~the inter-block $3 \times 3$ depthwise mixing smooths boundary discontinuities; (4)~reflection and replication padding introduce correlated values that interfere with the position-aware weights in $R$, whereas zeros provide a clean neutral signal. Combined with hierarchical reparameterization, which reinforces local patterns, boundary artifacts are kept negligible. The quantitative results are summarized in \cref{tab:padding}.

\begin{table}[!ht]
\centering
\caption{Padding-strategy ablation on ADE20K (single-scale mIoU).}
\label{tab:padding}
\small
\setlength{\tabcolsep}{6pt}
\begin{tabular}{l|c}
\toprule
\textbf{Padding strategy} & \textbf{mIoU (SS, \%)} \\
\midrule
\rowcolor[gray]{0.9} Zero-padding (ours) & \textbf{48.8} \\
Reflection padding & 48.6 \\
Replication padding & 48.4 \\
\bottomrule
\end{tabular}
\end{table}

The gap between zero-padding and the alternatives is small ($0.2$--$0.4$~mIoU) but consistent across three independent runs, and we therefore use zero-padding throughout the paper. This comparison is performed with the hierarchical $14 \times 14 + 7 \times 7$ configuration at the high-resolution stages, where the padding fraction is highest; for smaller windows the absolute differences shrink further because less padding is required.

\section{Analysis of Learned Weight Matrices}
\label{app:weight_analysis}

To understand what WBMM learns, we visualize the effective weight matrix $M$ at inference and observe three interpretable patterns; \cref{fig:weight_analysis} summarizes the visualization.

\textbf{(a) Locality.}
Across stages, $M$ exhibits strong diagonal dominance with diagonal-to-off-diagonal ratios of $41$--$58\times$. The mean off-diagonal magnitude $|M(p,q)|$ decays by over $90\%$ within Euclidean distance $2$, confirming that WBMM \emph{automatically} learns a locality prior despite having no explicit regularization toward locality.

\textbf{(b) Channel specialization.}
Different channels exhibit clearly distinct preferences---horizontal, vertical, and diagonal---that resemble oriented edge detectors. This diversity arises naturally from the per-channel parameterization of $R$, in contrast to attention which shares weights across channels within a head.

\textbf{(c) Frequency selectivity.}
Channels split into low-pass and high-pass families (e.g., low-frequency energy ratio LF$=83\%$ vs.\ LF$=9\%$). Shallow stages (S2) bias toward high-pass, while deeper stages (S3/S4) are more balanced. The $(2w-1)^2 = 169$ parameters per channel are clearly sufficient to encode this diversity.

These patterns emerge \emph{without} explicit regularization, suggesting WBMM provides an efficient and expressive basis for learning spatial operators.

\begin{figure}[!htbp]
  \centering
%% No extension on purpose: pdflatex prefers wbmm_weight_matrices.pdf over the
%% .png, so dropping a vector re-export of this figure into this directory needs
%% no edit here.  Keep the same aspect ratio if you do (see figure-sources/README.md).
  \includegraphics[width=0.95\linewidth]{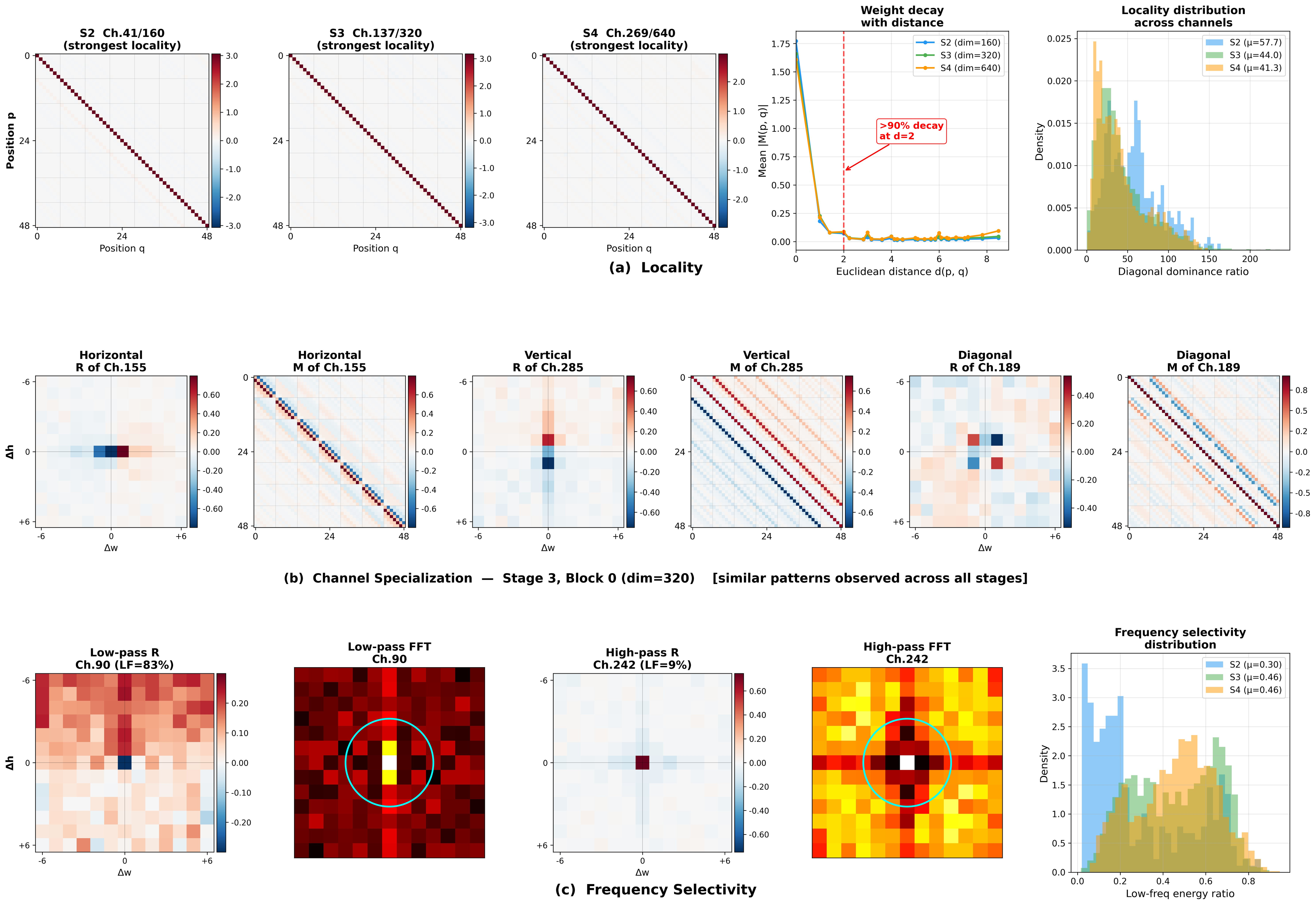}
  \caption{\textbf{Interpretable structure of learned WBMM weight matrices $M$.} (a)~Locality: $M$ exhibits strong diagonal dominance and $>90\%$ weight decay within distance 2. (b)~Channel specialization: channels learn distinct horizontal, vertical, and diagonal patterns resembling oriented edge detectors. (c)~Frequency selectivity: low-pass vs.\ high-pass channels coexist, with shallow stages biased toward high-pass and deeper stages more balanced.}
  \label{fig:weight_analysis}
\end{figure}

\section{Complete Operator-Level Benchmark Across All Batch Sizes}
\label{app:benchmark_all_batch}

We present comprehensive operator-level benchmark results across all tested batch sizes (4, 16, 64, 128, 256). All measurements use 256 channels, FP32 precision on a single NVIDIA A800 GPU (80GB), with 5 independent runs and IQR-based outlier removal. These results provide the complete data underlying \cref{fig:benchmark} and \cref{tab:batch_comparison} in the main text.

\textbf{Why Single-Layer Operator Benchmarks.}
We conduct operator-level benchmarks on single-layer feature maps for several important reasons:
\begin{enumerate}
    \item \textbf{Isolation of computational properties}: Single-layer testing isolates the fundamental computational characteristics of WBMM vs depthwise convolutions, free from confounding factors like network depth, skip connections, or layer interactions.
    \item \textbf{Architecture-agnostic conclusions}: Results from isolated operator benchmarks generalize across different CNN architectures (ConvNeXt, ResNet, etc.), not just UniRepLKNet.
    \item \textbf{Controlled experimental conditions}: Fixed channel count (256) and systematic variation of batch size and feature map resolution enable precise characterization of scaling behavior.
    \item \textbf{Reproducibility}: Isolated benchmarks are easier to reproduce and verify independently.
\end{enumerate}

\begin{table*}[htbp]
\centering
\caption{\textbf{Unified operator-level benchmark}: throughput (kFPS) across all batch sizes, methods, and feature map resolutions. All measurements use 256 channels, FP32 precision on single NVIDIA A800 GPU (80GB). $\Delta$: speedup relative to DW-Std $5 \times 5$ baseline. ``--'' indicates out-of-memory or invalid configuration.}
\label{tab:unified_benchmark}
\scriptsize
\setlength{\tabcolsep}{1.8pt}
\renewcommand{\arraystretch}{0.92}
\begin{tabular}{c|ll|cc|cc|cc|cc|cc|cc}
\toprule
\textbf{Batch} & \textbf{Method} & \textbf{Size} & \multicolumn{2}{c|}{\textbf{7$\times$7}} & \multicolumn{2}{c|}{\textbf{14$\times$14}} & \multicolumn{2}{c|}{\textbf{28$\times$28}} & \multicolumn{2}{c|}{\textbf{56$\times$56}} & \multicolumn{2}{c|}{\textbf{112$\times$112}} & \multicolumn{2}{c}{\textbf{224$\times$224}} \\
& & & kFPS & $\Delta$ & kFPS & $\Delta$ & kFPS & $\Delta$ & kFPS & $\Delta$ & kFPS & $\Delta$ & kFPS & $\Delta$ \\
\midrule
\multirow{12}{*}{\rotatebox{90}{\textbf{B=4}}}
& DW-Std & $3\times3$ & 140 & --7\% & 140 & 0\% & 145 & +85\% & 65 & +186\% & 17 & +124\% & 5.3 & +180\% \\
& \cellcolor[gray]{0.92}DW-Std & \cellcolor[gray]{0.92}$5\times5$ & \cellcolor[gray]{0.92}150 & \cellcolor[gray]{0.92}base & \cellcolor[gray]{0.92}140 & \cellcolor[gray]{0.92}base & \cellcolor[gray]{0.92}78 & \cellcolor[gray]{0.92}base & \cellcolor[gray]{0.92}23 & \cellcolor[gray]{0.92}base & \cellcolor[gray]{0.92}7.4 & \cellcolor[gray]{0.92}base & \cellcolor[gray]{0.92}1.9 & \cellcolor[gray]{0.92}base \\
& DW-Std & $7\times7$ & 119 & --21\% & 99 & --30\% & 48 & --39\% & 14 & --38\% & 4.5 & --39\% & 1.14 & --40\% \\
& DW-Std & $13\times13$ & 106 & --29\% & 59 & --58\% & 19 & --76\% & 6.4 & --72\% & 1.7 & --77\% & 0.42 & --78\% \\
& DW-Std & $27\times27$ & 46 & --69\% & 19 & --86\% & 6.2 & --92\% & 1.7 & --93\% & 0.43 & --94\% & 0.11 & --94\% \\
& DW-LKA & $7\times7$ & 78 & --48\% & 29 & --79\% & 7.5 & --90\% & 1.2 & --95\% & 0.18 & --98\% & 0.02 & --99\% \\
& DW-LKA & $13\times13$ & 77 & --49\% & 25 & --82\% & 5.3 & --93\% & 0.78 & --97\% & 0.10 & --99\% & 0.014 & --99\% \\
& DW-LKA & $27\times27$ & 78 & --48\% & 24 & --83\% & 3.4 & --96\% & 0.44 & --98\% & 0.054 & --99\% & -- & -- \\
& WBMM-C & $7\times7$ & 95 & --37\% & 55 & --61\% & 42 & --47\% & 27 & +18\% & 8.4 & +14\% & 2.6 & +38\% \\
& \cellcolor[gray]{0.85}WBMM-C & \cellcolor[gray]{0.85}$14\times14$ & \cellcolor[gray]{0.85}-- & \cellcolor[gray]{0.85}-- & \cellcolor[gray]{0.85}63 & \cellcolor[gray]{0.85}--55\% & \cellcolor[gray]{0.85}29 & \cellcolor[gray]{0.85}--63\% & \cellcolor[gray]{0.85}\textbf{32} & \cellcolor[gray]{0.85}\textbf{+40\%} & \cellcolor[gray]{0.85}\textbf{12} & \cellcolor[gray]{0.85}\textbf{+64\%} & \cellcolor[gray]{0.85}\textbf{3.8} & \cellcolor[gray]{0.85}\textbf{+101\%} \\
& WBMM-NC & $7\times7$ & 66 & --56\% & 47 & --66\% & 35 & --55\% & 24 & +4\% & 8.9 & +20\% & 2.6 & +37\% \\
& WBMM-NC & $14\times14$ & -- & -- & 21 & --85\% & 15 & --81\% & 16 & --31\% & 9.9 & +34\% & 3.5 & +84\% \\
\midrule
\multirow{12}{*}{\rotatebox{90}{\textbf{B=16}}}
& DW-Std & $3\times3$ & 558 & +22\% & 579 & +126\% & 256 & +182\% & 79 & +168\% & 21 & +184\% & 5.4 & +185\% \\
& \cellcolor[gray]{0.92}DW-Std & \cellcolor[gray]{0.92}$5\times5$ & \cellcolor[gray]{0.92}460 & \cellcolor[gray]{0.92}base & \cellcolor[gray]{0.92}256 & \cellcolor[gray]{0.92}base & \cellcolor[gray]{0.92}91 & \cellcolor[gray]{0.92}base & \cellcolor[gray]{0.92}29 & \cellcolor[gray]{0.92}base & \cellcolor[gray]{0.92}7.6 & \cellcolor[gray]{0.92}base & \cellcolor[gray]{0.92}1.9 & \cellcolor[gray]{0.92}base \\
& DW-Std & $7\times7$ & 352 & --24\% & 184 & --28\% & 55 & --39\% & 18 & --39\% & 4.6 & --40\% & 1.14 & --40\% \\
& DW-Std & $13\times13$ & 279 & --39\% & 90 & --65\% & 26 & --71\% & 6.7 & --77\% & 1.7 & --78\% & 0.42 & --78\% \\
& DW-Std & $27\times27$ & 84 & --82\% & 26 & --90\% & 6.8 & --93\% & 1.7 & --94\% & 0.43 & --94\% & 0.11 & --94\% \\
& DW-LKA & $7\times7$ & 313 & --32\% & 117 & --54\% & 30 & --67\% & 4.9 & --83\% & 0.72 & --91\% & 0.10 & --95\% \\
& DW-LKA & $13\times13$ & 313 & --32\% & 101 & --60\% & 21 & --77\% & 3.1 & --89\% & 0.42 & --94\% & 0.054 & --97\% \\
& DW-LKA & $27\times27$ & 313 & --32\% & 94 & --63\% & 14 & --85\% & 1.8 & --94\% & 0.22 & --97\% & -- & -- \\
& WBMM-C & $7\times7$ & 401 & --13\% & 191 & --25\% & 126 & +39\% & 39 & +31\% & 10 & +36\% & 2.6 & +40\% \\
& \cellcolor[gray]{0.85}WBMM-C & \cellcolor[gray]{0.85}$14\times14$ & \cellcolor[gray]{0.85}-- & \cellcolor[gray]{0.85}-- & \cellcolor[gray]{0.85}158 & \cellcolor[gray]{0.85}--38\% & \cellcolor[gray]{0.85}\textbf{142} & \cellcolor[gray]{0.85}\textbf{+56\%} & \cellcolor[gray]{0.85}\textbf{53} & \cellcolor[gray]{0.85}\textbf{+80\%} & \cellcolor[gray]{0.85}\textbf{15} & \cellcolor[gray]{0.85}\textbf{+99\%} & \cellcolor[gray]{0.85}\textbf{3.9} & \cellcolor[gray]{0.85}\textbf{+108\%} \\
& WBMM-NC & $7\times7$ & 279 & --39\% & 158 & --38\% & 110 & +21\% & 37 & +26\% & 10 & +35\% & 2.7 & +40\% \\
& WBMM-NC & $14\times14$ & -- & -- & 77 & --70\% & 72 & --21\% & 39 & +32\% & 14 & +82\% & 3.9 & +103\% \\
\midrule
\multirow{12}{*}{\rotatebox{90}{\textbf{B=64}}}
& DW-Std & $3\times3$ & 2315 & +85\% & 1042 & +138\% & 317 & +168\% & 85 & +182\% & 22 & +185\% & 5.5 & +187\% \\
& \cellcolor[gray]{0.92}DW-Std & \cellcolor[gray]{0.92}$5\times5$ & \cellcolor[gray]{0.92}1250 & \cellcolor[gray]{0.92}base & \cellcolor[gray]{0.92}437 & \cellcolor[gray]{0.92}base & \cellcolor[gray]{0.92}118 & \cellcolor[gray]{0.92}base & \cellcolor[gray]{0.92}30 & \cellcolor[gray]{0.92}base & \cellcolor[gray]{0.92}7.6 & \cellcolor[gray]{0.92}base & \cellcolor[gray]{0.92}1.9 & \cellcolor[gray]{0.92}base \\
& DW-Std & $7\times7$ & 895 & --28\% & 279 & --36\% & 72 & --39\% & 18 & --39\% & 4.6 & --40\% & 1.14 & --40\% \\
& DW-Std & $13\times13$ & 357 & --71\% & 104 & --76\% & 27 & --77\% & 6.8 & --77\% & 1.7 & --78\% & 0.42 & --78\% \\
& DW-Std & $27\times27$ & 110 & --91\% & 28 & --94\% & 7.0 & --94\% & 1.7 & --94\% & 0.43 & --94\% & -- & -- \\
& DW-LKA & $7\times7$ & 1250 & 0\% & 463 & +6\% & 118 & 0\% & 20 & --36\% & 2.8 & --63\% & 0.39 & --80\% \\
& DW-LKA & $13\times13$ & 1225 & --2\% & 403 & --8\% & 83 & --30\% & 12 & --59\% & 1.7 & --78\% & 0.22 & --89\% \\
& DW-LKA & $27\times27$ & 1225 & --2\% & 450 & +3\% & 54 & --54\% & 7.0 & --77\% & 0.87 & --89\% & -- & -- \\
& WBMM-C & $7\times7$ & 1008 & --19\% & 508 & +16\% & 157 & +33\% & 42 & +39\% & 11 & +38\% & 2.7 & +40\% \\
& \cellcolor[gray]{0.85}WBMM-C & \cellcolor[gray]{0.85}$14\times14$ & \cellcolor[gray]{0.85}-- & \cellcolor[gray]{0.85}-- & \cellcolor[gray]{0.85}\textbf{812} & \cellcolor[gray]{0.85}\textbf{+86\%} & \cellcolor[gray]{0.85}\textbf{219} & \cellcolor[gray]{0.85}\textbf{+86\%} & \cellcolor[gray]{0.85}\textbf{61} & \cellcolor[gray]{0.85}\textbf{+102\%} & \cellcolor[gray]{0.85}\textbf{15} & \cellcolor[gray]{0.85}\textbf{+101\%} & \cellcolor[gray]{0.85}\textbf{3.9} & \cellcolor[gray]{0.85}\textbf{+101\%} \\
& WBMM-NC & $7\times7$ & 812 & --35\% & 443 & +1\% & 151 & +28\% & 41 & +38\% & 10 & +37\% & 2.7 & +40\% \\
& WBMM-NC & $14\times14$ & -- & -- & 361 & --17\% & 159 & +35\% & 56 & +84\% & 15 & +96\% & 3.8 & +101\% \\
\midrule
\multirow{12}{*}{\rotatebox{90}{\textbf{B=128}}}
& DW-Std & $3\times3$ & 3289 & +111\% & 992 & +147\% & 333 & +176\% & 87 & +184\% & 21.8 & +185\% & 5.46 & +187\% \\
& \cellcolor[gray]{0.92}DW-Std & \cellcolor[gray]{0.92}$5\times5$ & \cellcolor[gray]{0.92}1563 & \cellcolor[gray]{0.92}base & \cellcolor[gray]{0.92}402 & \cellcolor[gray]{0.92}base & \cellcolor[gray]{0.92}121 & \cellcolor[gray]{0.92}base & \cellcolor[gray]{0.92}31 & \cellcolor[gray]{0.92}base & \cellcolor[gray]{0.92}7.6 & \cellcolor[gray]{0.92}base & \cellcolor[gray]{0.92}1.91 & \cellcolor[gray]{0.92}base \\
& DW-Std & $7\times7$ & 833 & --47\% & 279 & --31\% & 73 & --40\% & 18 & --40\% & 4.6 & --40\% & 1.14 & --40\% \\
& DW-Std & $13\times13$ & 330 & --79\% & 107 & --73\% & 27 & --78\% & 6.8 & --78\% & 1.7 & --78\% & 0.42 & --78\% \\
& DW-Std & $27\times27$ & 119 & --92\% & 29 & --93\% & 7.1 & --94\% & 1.7 & --94\% & 0.43 & --94\% & -- & -- \\
& DW-LKA & $7\times7$ & 2119 & +36\% & 530 & +32\% & 121 & 0\% & 20 & --35\% & 2.8 & --63\% & 0.39 & --80\% \\
& DW-LKA & $13\times13$ & 1812 & +16\% & 450 & +12\% & 85 & --29\% & 12 & --59\% & 1.7 & --78\% & 0.22 & --89\% \\
& DW-LKA & $27\times27$ & 1812 & +16\% & 490 & +22\% & 56 & --54\% & 7.0 & --77\% & 0.87 & --89\% & -- & -- \\
& WBMM-C & $7\times7$ & 2016 & +29\% & 566 & +41\% & 165 & +37\% & 42 & +39\% & 10.5 & +37\% & 2.66 & +40\% \\
& \cellcolor[gray]{0.85}WBMM-C & \cellcolor[gray]{0.85}$14\times14$ & \cellcolor[gray]{0.85}-- & \cellcolor[gray]{0.85}-- & \cellcolor[gray]{0.85}\textbf{1276} & \cellcolor[gray]{0.85}\textbf{+217\%} & \cellcolor[gray]{0.85}\textbf{242} & \cellcolor[gray]{0.85}\textbf{+100\%} & \cellcolor[gray]{0.85}\textbf{63} & \cellcolor[gray]{0.85}\textbf{+106\%} & \cellcolor[gray]{0.85}\textbf{15.1} & \cellcolor[gray]{0.85}\textbf{+97\%} & \cellcolor[gray]{0.85}\textbf{3.83} & \cellcolor[gray]{0.85}\textbf{+101\%} \\
& WBMM-NC & $7\times7$ & 1582 & +1\% & 525 & +31\% & 161 & +33\% & 42 & +38\% & 10.5 & +37\% & 2.67 & +40\% \\
& WBMM-NC & $14\times14$ & -- & -- & 641 & +59\% & 199 & +65\% & 60 & +96\% & 14.9 & +96\% & 3.84 & +102\% \\
\midrule
\multirow{12}{*}{\rotatebox{90}{\textbf{B=256}}}
& DW-Std & $3\times3$ & 4167 & +138\% & 1064 & +122\% & 341 & +180\% & 87 & +185\% & 21.8 & +185\% & -- & -- \\
& \cellcolor[gray]{0.92}DW-Std & \cellcolor[gray]{0.92}$5\times5$ & \cellcolor[gray]{0.92}1748 & \cellcolor[gray]{0.92}base & \cellcolor[gray]{0.92}480 & \cellcolor[gray]{0.92}base & \cellcolor[gray]{0.92}122 & \cellcolor[gray]{0.92}base & \cellcolor[gray]{0.92}31 & \cellcolor[gray]{0.92}base & \cellcolor[gray]{0.92}7.6 & \cellcolor[gray]{0.92}base & \cellcolor[gray]{0.92}-- & \cellcolor[gray]{0.92}-- \\
& DW-Std & $7\times7$ & 1073 & --39\% & 286 & --40\% & 73 & --40\% & 18 & --40\% & 4.6 & --40\% & -- & -- \\
& DW-Std & $13\times13$ & 412 & --76\% & 108 & --77\% & 27 & --78\% & 6.8 & --78\% & 1.7 & --78\% & -- & -- \\
& DW-Std & $27\times27$ & 120 & --93\% & 29 & --94\% & 7.1 & --94\% & 1.7 & --94\% & 0.43 & --94\% & -- & -- \\
& DW-LKA & $7\times7$ & 2174 & +24\% & 693 & +44\% & 123 & +1\% & 20 & --36\% & 2.8 & --63\% & 0.39 & -- \\
& DW-LKA & $13\times13$ & 2174 & +24\% & 590 & +23\% & 87 & --29\% & 12 & --59\% & 1.7 & --78\% & -- & -- \\
& DW-LKA & $27\times27$ & 2155 & +23\% & 538 & +12\% & 56 & --54\% & 7.0 & --77\% & 0.87 & --89\% & -- & -- \\
& WBMM-C & $7\times7$ & 2941 & +68\% & 630 & +31\% & 170 & +39\% & 43 & +39\% & 10.5 & +38\% & 2.65 & -- \\
& \cellcolor[gray]{0.85}WBMM-C & \cellcolor[gray]{0.85}$14\times14$ & \cellcolor[gray]{0.85}-- & \cellcolor[gray]{0.85}-- & \cellcolor[gray]{0.85}\textbf{1786} & \cellcolor[gray]{0.85}\textbf{+272\%} & \cellcolor[gray]{0.85}\textbf{247} & \cellcolor[gray]{0.85}\textbf{+102\%} & \cellcolor[gray]{0.85}\textbf{62} & \cellcolor[gray]{0.85}\textbf{+102\%} & \cellcolor[gray]{0.85}\textbf{14.9} & \cellcolor[gray]{0.85}\textbf{+96\%} & \cellcolor[gray]{0.85}3.77 & \cellcolor[gray]{0.85}-- \\
& WBMM-NC & $7\times7$ & 2451 & +40\% & 604 & +26\% & 168 & +37\% & 43 & +39\% & 10.5 & +38\% & 2.64 & -- \\
& WBMM-NC & $14\times14$ & -- & -- & 1055 & +120\% & 223 & +83\% & 61 & +99\% & 14.9 & +96\% & 3.81 & -- \\
\bottomrule
\end{tabular}

\smallskip
\raggedright\scriptsize
\textbf{Methods}: DW-Std = Standard depthwise convolution (implicit GEMM via PyTorch/cuDNN); DW-LKA = Large Kernel Acceleration kernels of RepLKNet/UniRepLKNet; WBMM-C = WBMM with cached weight matrix (inference mode, see \cref{sec:caching}); WBMM-NC = WBMM with on-the-fly weight construction (training mode). WBMM-C and WBMM-NC differ only in whether $M$ is recomputed each forward pass; they produce identical outputs. Highlighted rows show baseline ($5\times5$) and best WBMM configuration ($14\times14$ cached).
\end{table*}

\subsection{Benchmark Analysis Summary}

The comprehensive benchmark results in \cref{tab:unified_benchmark} confirm the following architecture-agnostic properties:

\textbf{Key Finding 1: Standard Convolution Degradation is Inherent.} The 71--78\% throughput degradation of DW-Std from $5 \times 5$ to $13 \times 13$ (and 92--94\% for $27 \times 27$) holds across all tested batch sizes and feature map resolutions $\geq 28 \times 28$, confirming that this penalty is inherent to the gather-based computation pattern (cf.\ \cref{sec:operator_benchmark}).

\textbf{Key Finding 2: LKA Has a Crossover Point.} LKA shows effectiveness only on small feature maps. At batch$\geq$64, LKA provides speedup only for feature maps $\leq 14 \times 14$. On large feature maps ($\geq 112 \times 112$), LKA is 63--99\% slower than baseline.

\textbf{Key Finding 3: WBMM Scaling.} WBMM-C $14\times14$ achieves $+56\%$--$+108\%$ speedup over baseline on feature maps $\geq 28\times28$ at batch$\geq$16, peaking at $+272\%$ on $14\times14$ maps at batch=256.

\textbf{Key Finding 4: Caching Benefit.} WBMM-C provides up to $\sim$3$\times$ speedup over WBMM-NC on the smallest feature maps (e.g., a single $14 \times 14$ window at small batch), where weight construction overhead is proportionally larger.

\section{Segmentation Inference Speed Analysis}
\label{app:seg_speed}

We provide comprehensive throughput measurements for both Tiny and Small model variants on ADE20K semantic segmentation. All experiments are conducted on a \textbf{single NVIDIA A800 GPU (80GB)} with FP32 precision. These results support the claims made in \cref{sec:segmentation}.

\subsection{Tiny Model Results}

\Cref{tab:ade20k_speed_tiny} reports the throughput measurements for the Tiny models.

\begin{table}[!htbp]
\centering
\caption{Inference throughput (FPS) on ADE20K for Tiny models (NVIDIA A800 GPU, FP32). WBMM approaches Conv $3\times3$ speed upper bound; hierarchical variant outperforms Conv $5\times5$ at high resolution. Here Conv $k \times k$ denotes a depthwise convolution baseline.}
\label{tab:ade20k_speed_tiny}
\small
\setlength{\tabcolsep}{4pt}
\begin{tabular}{lcccccccccccc}
\toprule
\textbf{Backbone} & \multicolumn{4}{c}{\textbf{512$\times$512}} & \multicolumn{4}{c}{\textbf{512$\times$1024}} & \multicolumn{4}{c}{\textbf{1024$\times$1024}} \\
\cmidrule(lr){2-5} \cmidrule(lr){6-9} \cmidrule(lr){10-13}
& \textbf{B2} & \textbf{B4} & \textbf{B8} & \textbf{B16} & \textbf{B2} & \textbf{B4} & \textbf{B8} & \textbf{B16} & \textbf{B2} & \textbf{B4} & \textbf{B8} & \textbf{B16} \\
\midrule
Conv $3\times3$ & 95.3 & 104.6 & 106.1 & 106.7 & 56.0 & 56.0 & 57.4 & 57.8 & 29.1 & 29.6 & 30.3 & 31.0 \\
Conv $5\times5$ & 92.4 & 101.8 & 103.3 & 104.0 & 54.4 & 54.5 & 55.7 & 56.0 & 28.1 & 28.5 & 29.2 & 29.8 \\
\midrule
UniRepLKNet-T & 87.7 & 95.7 & 97.0 & 97.3 & 50.9 & 50.9 & 52.1 & 52.6 & 26.3 & 26.7 & 27.3 & 27.8 \\
UniRepLKNet-T (LKA) & 43.6 & 63.3 & 79.1 & 91.2 & 16.0 & 24.8 & 34.9 & 43.9 & 7.8 & 12.4 & 17.9 & 23.0 \\
\midrule
\rowcolor[gray]{0.9} WBMM-T ($7\times7$) & \textbf{90.5} & \textbf{101.6} & \textbf{103.7} & \textbf{104.6} & \textbf{54.3} & \textbf{54.9} & \textbf{56.3} & \textbf{56.5} & \textbf{28.2} & \textbf{28.8} & \textbf{29.5} & \textbf{30.1} \\
\rowcolor[gray]{0.9} WBMM-T (Hier) & 83.7 & 96.6 & 100.3 & 102.8 & 51.8 & 53.4 & 55.7 & 56.4 & 27.7 & 28.6 & 29.6 & 30.4 \\
\bottomrule
\end{tabular}
\end{table}

\paragraph{Key Observations for Tiny Model.}

\textbf{(1) WBMM approaches theoretical upper bound}: At $512\times512$ with batch=16, WBMM-T with $7 \times 7$ windows achieves 104.6 FPS, only 2.0\% slower than the Conv $3 \times 3$ upper bound (106.7 FPS) despite providing a $5.4\times$ larger \emph{per-layer receptive field} (49 vs.\ 9 positions per layer). This demonstrates that WBMM's regular memory access pattern nearly eliminates the performance penalty traditionally associated with large receptive fields.

\textbf{(2) WBMM excels at high resolution}: At $1024\times1024$ resolution---critical for dense prediction quality---hierarchical WBMM achieves 30.4 FPS (batch=16), \textbf{outperforming} even Conv $5 \times 5$ (29.8 FPS) while providing a $7.8\times$ larger per-layer receptive field (196 vs.\ 25 positions per layer). This validates WBMM's core design principle: efficiency improves rather than degrades with larger feature maps.

\textbf{(3) Consistent speedup over UniRepLKNet}: WBMM delivers 1.03--1.08$\times$ speedup over UniRepLKNet (without LKA) across all tested configurations. Against LKA-accelerated UniRepLKNet, WBMM's advantage is even more pronounced at high resolution with small batch sizes where LKA's custom kernels are least efficient.

\textbf{(4) LKA severely degrades on high-resolution}: UniRepLKNet with LKA shows dramatic performance degradation on large feature maps, confirming our operator-level finding that LKA becomes counterproductive for dense prediction tasks.

\subsection{Small Model Results}

\Cref{tab:ade20k_speed_small} reports the throughput measurements for the Small models.

\begin{table}[!htbp]
\centering
\caption{Inference throughput (FPS) on ADE20K for Small models measured on a single NVIDIA A800 GPU. Conv $3\times3$ (a depthwise convolution baseline) represents the speed upper bound. WBMM-S+Hier uses $14 \times 14$ windows with $7 \times 7$ hierarchical parameterization.}
\label{tab:ade20k_speed_small}
\small
\setlength{\tabcolsep}{4pt}
\begin{tabular}{lcccccccccccc}
\toprule
\textbf{Backbone} & \multicolumn{4}{c}{\textbf{512$\times$512}} & \multicolumn{4}{c}{\textbf{512$\times$1024}} & \multicolumn{4}{c}{\textbf{1024$\times$1024}} \\
\cmidrule(lr){2-5} \cmidrule(lr){6-9} \cmidrule(lr){10-13}
& \textbf{B2} & \textbf{B4} & \textbf{B8} & \textbf{B16} & \textbf{B2} & \textbf{B4} & \textbf{B8} & \textbf{B16} & \textbf{B2} & \textbf{B4} & \textbf{B8} & \textbf{B16} \\
\midrule
Conv $3\times3$ & 82.3 & 91.5 & 93.8 & 94.5 & 48.5 & 49.1 & 50.1 & 50.7 & 25.0 & 25.6 & 26.3 & 26.8 \\
Conv $5\times5$ & 78.9 & 86.9 & 89.0 & 90.2 & 46.0 & 46.6 & 47.9 & 48.6 & 24.1 & 24.4 & 25.1 & 25.6 \\
\midrule
UniRepLKNet-S & 74.5 & 81.6 & 83.7 & 85.0 & 43.2 & 43.9 & 45.3 & 45.8 & 22.5 & 23.0 & 23.7 & 24.1 \\
UniRepLKNet-S (LKA) & 36.9 & 54.0 & 68.5 & 79.8 & 13.5 & 21.0 & 30.0 & 38.1 & 6.6 & 10.6 & 15.3 & 19.7 \\
\midrule
\rowcolor[gray]{0.9} WBMM-S ($7\times7$) & \textbf{77.7} & \textbf{88.2} & \textbf{92.2} & \textbf{93.9} & \textbf{47.4} & \textbf{48.8} & \textbf{50.3} & \textbf{50.7} & \textbf{25.0} & \textbf{25.6} & \textbf{26.3} & \textbf{26.9} \\
\rowcolor[gray]{0.9} WBMM-S (Hier) & 72.3 & 84.5 & 89.2 & 91.5 & 44.7 & 46.8 & 48.8 & 49.5 & 23.9 & 24.8 & 25.5 & 26.3 \\
\bottomrule
\end{tabular}
\end{table}

\paragraph{Key Observations for Small Model.} WBMM-S with $7\times7$ windows approaches theoretical maximum efficiency: 93.9 FPS vs 94.5 FPS for Conv $3\times3$ at $512\times512$ (only 0.6\% gap), and matches Conv $3\times3$ exactly at $1024\times1024$ (26.9 vs 26.8 FPS). The hierarchical variant demonstrates strong scalability, achieving 26.3 FPS at $1024\times1024$ (B16) which exceeds Conv $5\times5$ by 2.7\% despite a $7.8\times$ larger \emph{per-layer receptive field} (196 vs.\ 25 positions per layer). Compared to UniRepLKNet-S, WBMM-S achieves consistent speedup across all configurations.

\section{Hardware Generalization: CPU and Edge Device Benchmarks}
\label{app:hardware_generalization}

To demonstrate WBMM's platform-independent efficiency, we conduct comprehensive experiments on diverse hardware configurations beyond the main GPU experiments. These results support the brief hardware generalization summary in \cref{sec:hardware}.

\subsection{Experimental Setup}

\textbf{Hardware Configurations:}
\begin{itemize}
    \item \textbf{High-end GPU}: NVIDIA A800 (80GB), FP32 precision.
    \item \textbf{Desktop CPU}: Intel Core i7-13700K, FP32 precision.
    \item \textbf{Edge Device}: NVIDIA Jetson Orin Nano (8GB), GPU with FP16 precision and CPU with INT8 quantization.
\end{itemize}

\subsection{Desktop GPU and CPU Results}

\Cref{tab:desktop_inference} reports inference speed on desktop GPU and CPU.

\begin{table}[!htbp]
\centering
\caption{Inference speed comparison between WBMM (W) and UniRepLKNet with/without acceleration (UA/UN) on desktop hardware. GPU: NVIDIA A800 with batch=128, FP32. CPU: Intel Core i7-13700K, FP32.}
\label{tab:desktop_inference}
\small
\setlength{\tabcolsep}{2.5pt}
\renewcommand{\arraystretch}{1.15}
\begin{tabular}{|c|c||ccc|cc||ccc|ccc|ccc|}
\hline
\multirow{3}{*}{\rotatebox{90}{\textbf{Var.}}} & \multirow{3}{*}{\textbf{Res.}} & \multicolumn{5}{c||}{\textbf{GPU (Batch=128, FP32)}} & \multicolumn{9}{c|}{\textbf{CPU (FP32)}} \\
\cline{3-16}
& & \multicolumn{3}{c|}{\textbf{FPS}} & \multicolumn{2}{c||}{\textbf{Speedup}} & \multicolumn{3}{c|}{\textbf{B=1}} & \multicolumn{3}{c|}{\textbf{B=4}} & \multicolumn{3}{c|}{\textbf{B=16}} \\
\cline{3-16}
& & \textbf{W} & \textbf{UA} & \textbf{UN} & \textbf{W/UA} & \textbf{W/UN} & \textbf{W} & \textbf{UN} & \textbf{W/UN} & \textbf{W} & \textbf{UN} & \textbf{W/UN} & \textbf{W} & \textbf{UN} & \textbf{W/UN} \\
\hline
\multirow{3}{*}{P} & 224 & 4664 & 4484 & 3344 & 1.04$\times$ & 1.39$\times$ & 58.0 & 50.1 & 1.16$\times$ & 86.7 & 62.7 & 1.38$\times$ & 83.0 & 56.0 & 1.48$\times$ \\
& 512 & 933 & 840 & 669 & 1.11$\times$ & 1.39$\times$ & 17.0 & 11.8 & 1.44$\times$ & 12.9 & 12.5 & 1.03$\times$ & 9.7 & 7.9 & 1.23$\times$ \\
& 1024 & 230 & 180 & 166 & 1.28$\times$ & 1.39$\times$ & 2.8 & 2.3 & 1.22$\times$ & 2.2 & 1.9 & 1.16$\times$ & 1.9 & 1.7 & 1.12$\times$ \\
\hline
\multirow{3}{*}{N} & 224 & 3292 & 3174 & 2333 & 1.04$\times$ & 1.41$\times$ & 44.7 & 35.8 & 1.25$\times$ & 62.0 & 41.9 & 1.48$\times$ & 56.6 & 38.2 & 1.48$\times$ \\
& 512 & 640 & 578 & 457 & 1.11$\times$ & 1.40$\times$ & 11.9 & 8.2 & 1.45$\times$ & 9.7 & 8.3 & 1.17$\times$ & 7.9 & 6.2 & 1.27$\times$ \\
& 1024 & 159 & 124 & 113 & 1.28$\times$ & 1.41$\times$ & 2.1 & 1.7 & 1.24$\times$ & 1.8 & 1.4 & 1.29$\times$ & 1.5 & 1.3 & 1.15$\times$ \\
\hline
\multirow{3}{*}{T} & 224 & 1998 & 1963 & 1538 & 1.02$\times$ & 1.30$\times$ & 24.1 & 22.1 & 1.09$\times$ & 37.8 & 28.2 & 1.34$\times$ & 34.7 & 25.3 & 1.37$\times$ \\
& 512 & 382 & 360 & 300 & 1.06$\times$ & 1.27$\times$ & 7.2 & 5.6 & 1.29$\times$ & 6.0 & 5.6 & 1.07$\times$ & 4.8 & 4.0 & 1.20$\times$ \\
& 1024 & 96 & 80 & 75 & 1.20$\times$ & 1.28$\times$ & 1.3 & 1.1 & 1.18$\times$ & 1.1 & 0.9 & 1.22$\times$ & 0.9 & 0.8 & 1.13$\times$ \\
\hline
\multirow{3}{*}{S} & 224 & 1359 & 1341 & 1092 & 1.01$\times$ & 1.24$\times$ & 16.3 & 15.4 & 1.06$\times$ & 22.4 & 18.7 & 1.20$\times$ & 21.6 & 16.3 & 1.33$\times$ \\
& 512 & 260 & 248 & 212 & 1.05$\times$ & 1.23$\times$ & 4.5 & 3.7 & 1.22$\times$ & 3.9 & 3.7 & 1.05$\times$ & 2.9 & 2.5 & 1.16$\times$ \\
& 1024 & 65 & 56 & 53 & 1.16$\times$ & 1.23$\times$ & 0.9 & 0.7 & 1.29$\times$ & 0.7 & 0.6 & 1.17$\times$ & 0.6 & 0.5 & 1.20$\times$ \\
\hline
\end{tabular}

\smallskip
\raggedright\scriptsize
\textbf{Key findings}: (1) On GPU, WBMM achieves 1.01--1.28$\times$ speedup over accelerated UniRepLKNet (UA) and 1.23--1.41$\times$ over non-accelerated (UN). (2) On CPU where specialized large kernel acceleration is unavailable, WBMM achieves 1.03--1.48$\times$ speedup by leveraging standard optimized matrix operations.
\end{table}

\subsection{Edge Device Results}

\Cref{tab:edge_inference} reports inference speed on the edge device.

\begin{table}[!htbp]
\centering
\caption{Inference speed (FPS) comparison on edge device (NVIDIA Jetson Orin Nano, 8GB). GPU uses FP16 precision; CPU uses INT8 quantization.}
\label{tab:edge_inference}
\small
\setlength{\tabcolsep}{2pt}
\renewcommand{\arraystretch}{1.15}
\begin{tabular}{|c|c||ccc|cc||ccc|cc||cc|c||cc|c|}
\hline
\multirow{3}{*}{\rotatebox{90}{\textbf{Var.}}} & \multirow{3}{*}{\textbf{Res.}} & \multicolumn{10}{c||}{GPU (FP16)} & \multicolumn{6}{c|}{CPU (INT8)} \\
\cline{3-18}
& & \multicolumn{5}{c|}{Batch=4} & \multicolumn{5}{c||}{Batch=16} & \multicolumn{3}{c|}{Batch=1} & \multicolumn{3}{c|}{Batch=4} \\
\cline{3-18}
& & W & UA & UN & W/UA & W/UN & W & UA & UN & W/UA & W/UN & W & UN & W/UN & W & UN & W/UN \\
\hline
\multirow{2}{*}{P} & 224 & 137 & 69.6 & 87.3 & 1.97$\times$ & 1.57$\times$ & 164 & 128 & 122 & 1.28$\times$ & 1.34$\times$ & 0.97 & 0.47 & 2.06$\times$ & 5.17 & 3.45 & 1.50$\times$ \\
& 512 & 30.7 & 9.9 & 23.1 & 3.10$\times$ & 1.33$\times$ & 32 & 21 & 23.5 & 1.52$\times$ & 1.36$\times$ & 0.66 & 0.27 & 2.44$\times$ & 1.86 & 0.92 & 2.02$\times$ \\
\hline
\multirow{2}{*}{N} & 224 & 97 & 49.1 & 77.4 & 1.98$\times$ & 1.25$\times$ & 115 & 89.6 & 84.7 & 1.28$\times$ & 1.36$\times$ & 0.73 & 0.36 & 2.03$\times$ & 4.02 & 2.71 & 1.48$\times$ \\
& 512 & 21.5 & 6.9 & 16 & 3.12$\times$ & 1.34$\times$ & 22.3 & 14.7 & 16.3 & 1.52$\times$ & 1.37$\times$ & 0.46 & 0.2 & 2.30$\times$ & 1.25 & 0.66 & 1.89$\times$ \\
\hline
\multirow{2}{*}{T} & 224 & 61.2 & 34.3 & 52.1 & 1.78$\times$ & 1.17$\times$ & 73.2 & 59.7 & 57.3 & 1.23$\times$ & 1.28$\times$ & 0.41 & 0.19 & 2.16$\times$ & 2.6 & 1.82 & 1.43$\times$ \\
& 512 & 13.6 & 5 & 10.8 & 2.72$\times$ & 1.26$\times$ & 14.2 & 10 & 11.1 & 1.42$\times$ & 1.28$\times$ & 0.25 & 0.12 & 2.08$\times$ & 0.71 & 0.31 & 2.29$\times$ \\
\hline
\multirow{2}{*}{S} & 224 & 43.9 & 26.2 & 38.1 & 1.68$\times$ & 1.15$\times$ & 51.6 & 43.3 & 41.8 & 1.19$\times$ & 1.23$\times$ & 0.22 & 0.13 & 1.69$\times$ & 1.63 & 1.23 & 1.33$\times$ \\
& 512 & 9.6 & 3.9 & 7.9 & 2.46$\times$ & 1.22$\times$ & -- & -- & -- & -- & -- & 0.14 & 0.08 & 1.75$\times$ & 0.41 & 0.27 & 1.52$\times$ \\
\hline
\end{tabular}

\smallskip
\raggedright\scriptsize
\textbf{Key findings}: (1) On edge GPU with FP16, WBMM achieves 1.19--3.12$\times$ speedup over accelerated UniRepLKNet, with larger gains at higher resolutions. (2) On edge CPU with INT8 quantization, WBMM achieves 1.33--2.44$\times$ speedup, demonstrating excellent compatibility with quantization.
\end{table}

\subsection{Analysis and Discussion}

\textbf{1.~Platform-Independent Efficiency}: WBMM consistently outperforms UniRepLKNet across all tested hardware configurations because it relies on standard batched matrix multiplication operations that are universally optimized.

\textbf{2.~Scaling with Input Resolution}: On GPU, WBMM's advantage increases with higher input resolution (e.g., 1.04$\times$ at 224$\times$224 to 1.28$\times$ at 1024$\times$1024 for WBMM-P vs UA).

\textbf{3.~CPU Efficiency}: On CPU where LKA acceleration is not available, WBMM achieves substantial speedups (up to 1.48$\times$) by leveraging highly optimized BLAS libraries.

\textbf{4.~Edge Device Compatibility}: WBMM demonstrates excellent efficiency with both FP16 and INT8 precision, confirming compatibility with quantization techniques essential for edge deployment.

\section{Training Configurations}
\label{app:training_config}

This section provides complete training configurations for all experiments reported in the main text.

\subsection{ImageNet-1K Classification}

\Cref{tab:imagenet_config} lists the ImageNet-1K training configuration.

\begin{table}[!htbp]
\centering
\caption{Training configurations for ImageNet-1K experiments (FP32 precision, 8$\times$ NVIDIA A800 GPUs).}
\label{tab:imagenet_config}
\small
\begin{tabular}{l|cccc}
\toprule
\textbf{Settings} & \textbf{Pico} & \textbf{Nano} & \textbf{Tiny} & \textbf{Small} \\
\midrule
Input resolution & \multicolumn{4}{c}{$224 \times 224$} \\
Batch size & \multicolumn{4}{c}{4096} \\
Optimizer & \multicolumn{4}{c}{AdamW} \\
Learning rate & \multicolumn{4}{c}{4e-3} \\
LR schedule & \multicolumn{4}{c}{Cosine} \\
Weight decay & \multicolumn{4}{c}{0.05} \\
Warmup epochs & \multicolumn{4}{c}{5} \\
Total epochs & \multicolumn{4}{c}{300} \\
\midrule
Mixup $\alpha$ & 0.3 & 0.5 & 0.8 & 0.8 \\
CutMix $\alpha$ & 0.3 & 0.5 & 1.0 & 1.0 \\
Random erasing & \multicolumn{4}{c}{0.25} \\
Label smoothing & \multicolumn{4}{c}{0.1} \\
Drop path rate & 0.1 & 0.1 & 0.2 & 0.4 \\
\bottomrule
\end{tabular}
\end{table}

\subsection{COCO Object Detection}

Following the $3\times$ training schedule:
\begin{itemize}
    \item Framework: MMDetection~\cite{chen2019mmdetection} with Cascade Mask R-CNN.
    \item Training epochs: 36.
    \item Batch size: 16.
    \item Optimizer: AdamW with learning rate 1e-4.
    \item Image scales: Short side 480--800, long side $\leq$ 1333.
    \item Pre-training: ImageNet-1K weights.
\end{itemize}

\subsection{ADE20K Semantic Segmentation}

\begin{itemize}
    \item Framework: OpenMMLab semantic segmentation toolbox (MMSegmentation) with UPerNet.
    \item Training iterations: 160k.
    \item Batch size: 16.
    \item Optimizer: AdamW with learning rate 1e-4.
    \item LR schedule: Polynomial decay (power=1.0).
    \item Crop size: $512 \times 512$.
    \item Pre-training: ImageNet-1K weights.
\end{itemize}

\section{Model Architecture Specifications}
\label{app:architecture}

\Cref{tab:model_specs} summarizes the stage depths, channel dimensions, and computational costs of all model variants.

\begin{table}[!htbp]
\centering
\caption{Complete model specifications showing stage depths, channel dimensions, and computational costs.}
\label{tab:model_specs}
\small
\begin{tabular}{l|cccc|cc}
\toprule
\textbf{Model} & \textbf{S1} & \textbf{S2} & \textbf{S3} & \textbf{S4} & \textbf{Params} & \textbf{FLOPs} \\
& Depth & Depth & Depth & Depth & (M) & (G) \\
\midrule
WBMM-P & 2 & 2 & 6 & 2 & 10.6 & 1.6 \\
Channels & 64 & 128 & 256 & 512 & & \\
\midrule
WBMM-N & 2 & 2 & 8 & 2 & 18.1 & 2.7 \\
Channels & 80 & 160 & 320 & 640 & & \\
\midrule
WBMM-T & 3 & 3 & 18 & 3 & 31.0 & 4.8 \\
Channels & 80 & 160 & 320 & 640 & & \\
\midrule
WBMM-S & 3 & 3 & 27 & 3 & 55.6 & 9.0 \\
Channels & 96 & 192 & 384 & 768 & & \\
\bottomrule
\end{tabular}
\end{table}

\begin{table}[!htbp]
\centering
\caption{Stage-specific block patterns. ``W'' denotes WBMM $7 \times 7$ block; ``D'' denotes $3 \times 3$ depthwise convolution; ``$\leftrightarrow$'' indicates alternating pattern.}
\label{tab:stage_patterns}
\small
\setlength{\tabcolsep}{3pt}
\begin{tabular}{l|cccc|l}
\toprule
\textbf{Model} & \textbf{S1} & \textbf{S2} & \textbf{S3} & \textbf{S4} & \textbf{Task} \\
\midrule
Pico & [D,D] & [W,D] & [W$\leftrightarrow$D]$\times$3 & [W,W] & Classification \\
Nano & [D,D] & [W,D] & [W$\leftrightarrow$D]$\times$4 & [W,W] & Classification \\
Tiny & [D,D,D] & [W,D,W] & [W$\leftrightarrow$D]$\times$9 & [W,W,W] & Classification \\
Small & [D,D,D] & [W,D,W] & [W,D,D]$\times$9 & [W,W,W] & Classification \\
\midrule
Tiny$^*$ & [W,D,W] & [W,D,W] & [W$\leftrightarrow$D]$\times$9 & [W,W,W] & Seg./Det. \\
Small$^*$ & [W,D,W] & [W,D,W] & [W,D,D]$\times$9 & [W,W,W] & Seg./Det. \\
\bottomrule
\end{tabular}

\smallskip
\raggedright\scriptsize
$^*$For dense prediction variants, S1 and S2 use hierarchical $14 \times 14 + 7 \times 7$ windows, S3 uses a single $14 \times 14$ window, and S4 uses a single $7 \times 7$ window (its feature map being the smallest in the network).
\end{table}

\section{Hierarchical Reparameterization Algorithm Details}
\label{app:hier_algorithm}

This section provides the detailed algorithms for hierarchical reparameterization, referenced from \cref{sec:hier_reparam}.

\subsection{Training Mode: Independent Multi-Scale Learning}

\begin{algorithm}[!htbp]
\caption{WBMM Training Mode with Hierarchical Parameterization}
\label{alg:training_hierarchical}
\small
\begin{algorithmic}[1]
\REQUIRE Input $X \in \mathbb{R}^{B \times C \times H \times W}$
\REQUIRE Global bias $R_g \in \mathbb{R}^{C \times 27^2}$ for $14\times14$ windows
\REQUIRE Local bias $R_l \in \mathbb{R}^{C \times 13^2}$ for $7\times7$ sub-windows
\STATE
\STATE \textbf{// Global scale: $14\times14$ windows (with identity shortcut)}
\STATE $X_w \leftarrow \text{WindowPartition}(X, 14, 14)$
\STATE $M_g \leftarrow \text{ConstructMatrix}(R_g)$
\STATE $Y_g \leftarrow X_w  \cdot  M_g + X_w$ \quad \textit{// residual / identity shortcut}
\STATE
\STATE \textbf{// Local scale: $7\times7$ sub-windows (with identity shortcut)}
\STATE $X_{sub} \leftarrow \text{Repartition}(X_w, [C, N, 2, 7, 2, 7])$
\STATE $M_l \leftarrow \text{ConstructMatrix}(R_l)$
\STATE $Y_l \leftarrow X_{sub}  \cdot  M_l + X_{sub}$ \quad \textit{// residual / identity shortcut}
\STATE $Y_l \leftarrow \text{ReshapeToGlobalWindows}(Y_l)$
\STATE
\STATE \textbf{// Additive fusion}
\STATE $Y_w \leftarrow Y_g + Y_l$
\STATE $Y \leftarrow \text{InversePartition}(Y_w, H, W)$
\STATE \textbf{return} $Y$
\end{algorithmic}
\end{algorithm}

\subsection{Inference Mode: Zero-Cost Matrix Fusion}

\begin{algorithm}[!htbp]
\caption{WBMM Inference Mode with Fused Hierarchical Matrix}
\label{alg:inference_fused}
\small
\begin{algorithmic}[1]
\REQUIRE Trained $R_g \in \mathbb{R}^{C \times 27^2}$, $R_l \in \mathbb{R}^{C \times 13^2}$
\STATE
\STATE \textbf{// One-time fusion at model load}
\STATE $M_g \leftarrow \text{ConstructMatrix}(R_g)$ \quad \textit{// shape $(C, 196, 196)$ for $14 \times 14$ windows}
\STATE $M_l \leftarrow \text{ConstructMatrix}(R_l)$ \quad \textit{// shape $(C, 49, 49)$ for $7 \times 7$ sub-windows}
\STATE \textit{// view $M_g$ with 9 axes: $(C,\,s^{row}_{out},\,p^{row}_{out},\,s^{col}_{out},\,p^{col}_{out},\,s^{row}_{in},\,p^{row}_{in},\,s^{col}_{in},\,p^{col}_{in})$,}
\STATE \textit{// where $s^{*} \in \{0,1\}$ selects one of $2 \times 2$ sub-windows;}
\STATE \textit{// $p^{*} \in \{0,\ldots,6\}$ is the position inside that $7 \times 7$ sub-window}
\STATE $M_g \leftarrow M_g.\text{view}(C, 2, 7, 2, 7, 2, 7, 2, 7)$
\STATE $\tilde{M}_l \leftarrow M_l + I_{49}$ \quad \textit{// absorb local identity shortcut into $M_l$}
\STATE $\tilde{M}_l \leftarrow \tilde{M}_l.\text{view}(C, 7, 7, 7, 7)$ \quad \textit{// match a single diagonal block of $M_g$}
\FOR{$i = 0$ to $1$}
    \FOR{$j = 0$ to $1$}
        \STATE $M_g[:, i, :, j, :, i, :, j, :] \leftarrow M_g[:, i, :, j, :, i, :, j, :] + \tilde{M}_l$ \quad \textit{// $s^{row}_{out} = s^{row}_{in} = i,\;s^{col}_{out} = s^{col}_{in} = j$: only diagonal blocks updated}
    \ENDFOR
\ENDFOR
\STATE $M_{\text{fused}} \leftarrow M_g.\text{view}(C, 196, 196) + I_{196}$ \quad \textit{// absorb global identity shortcut}
\STATE
\STATE \textbf{// Forward pass (zero overhead)}
\STATE $X_w \leftarrow \text{WindowPartition}(X, 14, 14)$
\STATE $Y_w \leftarrow X_w \cdot M_{\text{fused}}$
\STATE $Y \leftarrow \text{InversePartition}(Y_w, H, W)$
\STATE \textbf{return} $Y$
\end{algorithmic}
\end{algorithm}

\textbf{Zero overhead property.} At deployment, the merged $M_{\text{fused}}$ is a single $C \times 196 \times 196$ tensor, so spatial mixing is carried out by one batched matrix multiplication over $14 \times 14$ windows---no more arithmetic per block than single-scale WBMM at $w = 14$, while expressing both global and local patterns. The two tables $R_g$ and $R_l$ are learned independently and both remain as parameters of the deployed model; their fusion is a one-time tensor addition at load and is invisible to the forward pass.

\textbf{Training--inference equivalence.}
\Cref{alg:training_hierarchical,alg:inference_fused} are functionally identical: the training mode computes $Y_w = X_w M_g + X_w + X_{sub} M_l + X_{sub}$ (with $X_{sub} M_l + X_{sub}$ reshaped back to global windows), while the inference mode computes $Y_w = X_w M_{\text{fused}}$. Because $X_{sub}$ is just $X_w$ repartitioned into $2 \times 2$ sub-windows, the contribution $X_{sub} M_l$ is structurally equivalent to multiplying $X_w$ by a block matrix that contains $M_l$ on its four diagonal blocks (and zeros off-diagonal), and the two identity shortcuts collapse into the $196 \times 196$ identity. Therefore $M_{\text{fused}} = M_g + \text{diag\_blocks}(M_l, M_l, M_l, M_l) + 2\,I_{196}$ reproduces the training-mode output exactly.

Per the segmentation/detection architectural specification (\cref{tab:stage_patterns}), this hierarchical fusion is used at the high-resolution stages S1 and S2 where multi-scale context contributes most; deeper stages run a single window per block, with S3 operating at $14 \times 14$ for spatial coverage and S4 using a single $7 \times 7$ window at the network's smallest feature map.

\section{Limitations and Future Work}
\label{app:limitations}

We note WBMM's current limitations along several dimensions.

\textbf{Memory footprint at very large windows.}
The constructed weight matrix $M \in \mathbb{R}^{C \times d \times d}$ grows as $O(C \cdot w^4)$ with window size $w$. At $w = 14$ ($d = 196$), $M$ takes on the order of tens of MB across the network's stages (specifically, $\sim 37.5$~MB per 256-channel layer at FP32, lower for earlier stages with fewer channels)---negligible compared to activations. For windows beyond $28 \times 28$, however, materializing the full $M$ becomes memory-prohibitive. Crucially, the bias table $R$ remains compact ($O(C \cdot (2w{-}1)^2)$), so this is an engineering rather than a representational limitation: a Flash-Attention-style tiling kernel that stores $R$ in SRAM and computes $M$ blocks on-the-fly via the index map (\cref{eq:index_computation,eq:index_select}) would avoid full materialization entirely. Because WBMM is purely linear (no softmax), such tiling is strictly simpler than Flash Attention's online softmax. Triton-based tiled kernels are a planned engineering effort.

\textbf{Window size vs.\ task.}
Pure large windows are not uniformly optimal: \cref{tab:ablation_window} shows that pure $14 \times 14$ underperforms pure $7 \times 7$ on ADE20K. The hierarchical reparameterization of \cref{sec:hier_reparam} resolves this by combining global and local patterns without slowing inference, but task-specific window selection still requires design effort. Automated window-size search is a natural next step.

\textbf{Input independence.}
WBMM constructs a single weight matrix shared across all samples and windows. This is a feature for efficiency on dense prediction (\cref{app:win_attn_compare}: WBMM matches or beats parameter-matched window attention on segmentation while being noticeably faster), but content-adaptive routing tasks may benefit from input-dependent weights. A lightweight input-dependent gate $\sigma(\cdot)$ over the cached $M$ is a promising hybrid we leave to future work.

\textbf{Generalization to 1D and 3D data.}
This paper focuses on 2D vision. WBMM's core abstraction---a compact relative-position bias table indexed into a per-channel weight matrix, applied via batched MatMul to contiguous windows---naturally extends to 1D temporal sequences and 3D spatio-temporal video. The optimal window shape (e.g., $T \times H \times W$ or factorized variants) and cross-window strategies in these domains require dedicated study and constitute a promising future direction.

\end{document}